\newcommand\Z{\mathbb Z}
\newcommand\R{\mathbb R}
\newcommand\E{\mathbb E}
\newtheorem{theorem}{Theorem}[section]
\newtheorem{remark}[theorem]{Remark}
\newtheorem{proposition}[theorem]{Proposition}
\def\abs#1{\left| #1 \right|}
\renewcommand{\norm}[1]{\ensuremath{\left\lVert #1 \right\rVert}}
\renewcommand{\norm}[1]{\lVert #1 \rVert}
\def\h_#1{\hat{#1}}
\def\wh_#1{\widehat{#1}}
\newcommand{\specialcell}[2][c]{%
  \begin{tabular}[#1]{@{}l@{}}#2\end{tabular}}
\newcommand\set[1]{\left\{#1\right\}} %usage \set{1,2,3,,}
\newcommand{\vk}[1]{\textcolor{black}{#1}}
\title{\bf Towards Unbiased and Accurate Deferral to Multiple Experts}
\author{Vijay Keswani\footnote{Work done while at Amazon.} \\ Yale University \and Matthew Lease \\ University of Texas at Austin\\ Amazon AWS AI \and Krishnaram Kenthapadi \\ Amazon AWS AI}
\date{}
\begin{document}
\maketitle

%\title{Towards Unbiased and Accurate Models for Deferral to Multiple Human Experts}
% \title{Towards Unbiased and Accurate Deferral to Multiple Experts}

%\author{Anonymous Author(s)}

%\iffalse
% \author{Vijay Keswani}
% \affiliation{%
% \institution{Yale University}
% }
% \author{Matthew Lease}
% \affiliation{%
% \institution{University of Texas, Austin\\ Amazon AWS AI}
% }
% \author{Krishnaram Kenthapadi}
% \affiliation{%
% \institution{Amazon AWS AI}
% }
%\fi

\begin{abstract}
    Machine learning models are often implemented in cohort with humans in the pipeline, with the model having an option to defer to a domain expert in cases where it has low confidence in its inference. 
    Our goal is to design mechanisms for ensuring accuracy and fairness in such prediction systems that combine machine learning model inferences and domain expert predictions. 
    Prior work on ``deferral systems'' in classification settings has focused on the setting of a pipeline with a single expert and aimed to accommodate the inaccuracies and biases of this expert to simultaneously learn an inference model and a deferral system. 
    Our work extends this framework to settings where multiple experts are available, with each expert having their own domain of expertise and biases. We propose a framework that simultaneously learns a classifier and a deferral system, with the deferral system choosing to defer to one or more human experts in cases of input where the classifier has low confidence. 
    We test our framework on a synthetic dataset and a content moderation dataset with biased synthetic experts, and show that it significantly improves the accuracy and fairness of the final predictions, compared to the baselines.
    We also collect crowdsourced labels for the content moderation task to construct a real-world dataset for the evaluation of hybrid machine-human frameworks and show that our proposed framework outperforms  baselines on this real-world dataset as well.
\end{abstract}

\maketitle
\clearpage
\tableofcontents
\clearpage

\section{Introduction} \label{sec:introduction}
Real-world applications of machine learning (ML) models often involve the model working together with human experts \cite{halfaker2019ores,de2020case}.
For example, a model that predicts the likelihood of a disease given patient information can choose to defer the decision to a doctor who can make a relatively more accurate diagnosis \cite{kieseberg2016trust, raghu2019direct}.
Similarly, risk 
%and financial 
assessment tools work together with judges and domain experts to provide a baseline recidivism risk estimate \cite{green2019disparate, duwe2017effects}. 
%\kk{As the sentence ends with recidivism and associated citations, it may be better to drop ``and financial''}
%VK: I have removed financial from the above sentence and added as example in the next sentence
%
Other examples of such hybrid decision-making settings include  financial analysis tools \cite{zetzsche2020artificial} and content moderation tools for abusive speech detection \cite{olteanu2017limits} and fake news identification \cite{fake_news}.

Human-in-the-loop frameworks are often employed in settings where automated models cannot be trusted to have high quality inferences for all kinds of inputs.
Beyond the incentive of improved overall accuracy, having human experts in the pipeline also ensures timely audits of the predictions \cite{sutton2018digitized} and helps fill gaps in the training of the automated models \cite{patterson2013bootstrapping, liu2019deep}.
%
%On the other hand, there are clear advantages to having a domain expert in the pipeline, even beyond the improvement in accuracy by proper handling of low confidence inferences.
%
A case in point is the study done by \citet{chouldechova2018case} which showed that erroneous risk assessments by a child maltreatment hotline screening tool were frequently flagged as being incorrect by the human reviewers, implying that automated tools may not always cover the entire feature space that the domain experts use to make the decision.

%Generally, human experts are employed along with automated models to ensure timely audits of the automated models, as well as, to fill gaps in the training of these models.
%
However, the interaction between an ML model and a human expert is inherently more complicated than an entirely-automated pipeline. 
Prior studies on settings where human-in-the-loop frameworks have been implemented provide evidence of such complexities \cite{cummings2004automation,alberdi2009people,goddard2012automation,narayanan2018humans}.
One serious complication is the possibility of aggravated biases against protected groups, defined by attributes such as gender and race.
With increasing utilization of ML in \textit{human classification} tasks, the problem of biases against protected groups in automated predictions has received a lot of interest.
This has led to a deep exploration of social biases in popular models/datasets and ways to algorithmically mitigate them \cite{barocas_fairness_2019, mehrabi2019survey}.
Nevertheless, a number of such biased models and datasets are still in use \cite{noble2018algorithms}.
In a pipeline that involves an interaction between a possibly-biased ML model and a human, the biases of the human can aggravate the biases of the model \cite{forough2020}.
For example, in a study by \citet{green2019disparate}, participants were given the demographic attributes and prior criminal record of various defendants, along with the model-predicted risk of recidivism associated with each defendant, and asked to predict the risk.
They found that the participants associated a higher risk with black defendants, compared to the model prediction.
In this case, the possible biases of the human in the pipeline seem to exacerbate the bias of the model prediction.
Similar ethical concerns regarding the interplay between the biases of model and humans have been highlighted in other papers \cite{chouldechova2017fair, raji2020saving}.

%These prior empirical studies that showcase the utility and challenges associated with hybrid machine-human pipelines motivate our framework for such pipelines.
Motivated by the challenges discussed above, we focus on mechanisms for ensuring accuracy and fairness in hybrid machine-human pipelines.
We consider the setting where a classification model is trained to either make a decision or defer the decision to human experts. 
%
%Almost all
Most machine-human pipelines employed in real-world applications have multiple human experts available to share the load and to cover different kinds of input samples \cite{chouldechova2018case, gronsund2020augmenting}.
Therefore, the hybrid decision-making framework will have an additional task of appropriately choosing one or more experts when deferring.
%
%Each expert can (and will) also have their own area of expertise, as well as, possible biases against certain protected groups, characterized by their prior predictions on some samples.
% [KK] Toned down a bit to avoid the risk of a reviewer contesting a stronger claim ...
Each expert may also have their own area of expertise as well as possible biases against certain protected groups, characterized by their prior predictions on some samples.
Correspondingly, the training of a machine learning model in such a composite pipeline has to take into account the domain expertise of the humans, and delegate the prediction task in an input-specific manner.
%or make assumptions about the average expert in cases where the experts are not fixed.
%
Hence, our goal is to train a classifier and a deferral system such that the final predictions of the composite system are accurate and unbiased.

% The outline of this document is the following:
% \begin{itemize}
%     \item Related Work - A brief outline of related work in human-in-the-loop settings.
%     \item Model Formulation - We briefly describe the problem setting and the model we consider.
%     \item Experimental Setup - The goal of this section is to provide a short outline of the possible real-world experiments that can be used to effectively simulate the setting of decision-making with multiple experts.
% \end{itemize}

\textbf{Our Contributions. }
%
%
%We present a formal \textit{joint learning framework} that captures the multiple-experts deferral setting for classification problems, and aims to learn a classifier and a \textit{deferrer}; the job of the deferrer in our framework is to select one or more experts (including the classifier) to make the final decision (\S\ref{sec:main_framework}).
We study the multiple-experts deferral setting for classification problems, and present a formal \textit{joint learning framework} that aims to simultaneously learn a classifier and a \textit{deferrer}. The job of the deferrer is to select one or more experts (including the classifier) to make the final decision (\S\ref{sec:main_framework}).
As part of the framework, we propose loss functions that capture the costs associated with any given classifier and deferrer.
We theoretically show that, given prior predictions from the human experts and true class labels for the training samples, the proposed loss functions can be optimized using gradient-descent algorithm to obtain an effective classifier and deferrer.
%\kk{Changed ``show'' to ``formally show.'' Are we referring to Prop 2.2 and Theorem 2.3? If yes, it would be helpful to add a sentence after Theorem 2.3 or at the end of section 2.1, rephrasing or relating to this claim (space permitting).}
%VK: Yes, it refers to Prop 2.2 and Theorem 2.3. Have added a line after Thm 2.3 regarding this claim.
%
Our framework further supports the settings where (a) number of experts that can be consulted for each input is limited, (b) each expert has an individual cost of consultation, and/or (c) expert predictions are available for only a subset of training samples (\S\ref{sec:variants}).
To ensure that the final predictions are unbiased with respect to a given protected attribute, we propose two fair variants of the framework (\textit{joint balanced} and \textit{joint minimax-fair}) that aim to improve error rates across all protected groups.
Our framework can handle both multi-class labels and non-binary protected attributes. 

We empirically demonstrate the efficacy of our framework and its variants on multiple datasets: a synthetic dataset constructed to highlight the importance of simultaneously learning a classifier and a deferrer (\S\ref{sec:synthetic_dataset}), an offensive language dataset \cite{davidson2017automated} with synthetically-generated experts (\S\ref{sec:offensive_synthetic}), and a real-world dataset constructed to specifically evaluate deferral frameworks with multiple available experts (\S\ref{sec:mturk}).
The real-world dataset consists of a large number of crowdsourced labels for the offensive language dataset,
%\cite{davidson2017automated}, 
and is also a contribution of this paper.
Unlike most crowdsourced datasets where the goal is simply to obtain accurate annotations, this dataset explicitly contains a dictionary of crowdworker (anonymized) to predicted labels, ensuring that the decision-making ability of each crowdworker can be inferred and consequently used to evaluate the performance of a hybrid framework like ours.
We plan to publish this dataset
as this will provide a strong empirical benchmark to foster future work.
%% [KK] Dropping phrase below for now due to space constraints 
%%(in addition to the code for our model and experiments).
%
For all datasets, our framework significantly improves the accuracy of the final predictions (compared to just using a classifier and other baselines, such as task allocation algorithms of \citet{li2015cheaper} and \citet{qiu2016crowdselect} from crowdsourcing literature), and for the offensive language datasets, the fair variants of the framework also reduce disparity across the dialect groups.

\textbf{Related Work. }
Given the difficulty of constructing and analyzing a human-in-the-loop framework, prior work has looked at human-in-the-loop settings from various viewpoints. 
%
%We present a short survey of related work on this topic, primarily focusing on papers that discuss the fairness of these composite frameworks or take into account the biases of the machine learning models and the human experts.
%
One direction of research has explored the idea of the classifier having a ``reject''/``pass'' option for contentious input samples \cite{el2010foundations, li2011knows, cortes2016learning, jung2014predicting, liu2019deepgamblers, cortes2016boosting,cortes2018online}.
While such an option is usually provided to ensure that low confidence decisions can be deferred to human experts, the penalty of abstaining from making a decision in these models is fixed, and therefore, they do not take into account whether the expert at the end of the pipeline has the relevant knowledge to make the decision or not.

On the other hand, papers that take the biases and/or accuracies of the human experts into consideration are inherently more robust, but also more difficult to train and analyze.
Prior theoretical models for learning to defer \cite{madras2018predict, mozannar2020consistent, raghu2019algorithmic, de2020regression, wilder2020learning} have constructed explicit loss functions/optimization methods to model the combined inaccuracies and biases of the classifier and the human expert.
Unlike the classifiers with reject option, they use a non-static loss function for the human expert and ensure that the penalty of deferring to a human expert is input-specific. 
%
%For training these frameworks using the proposed loss functions, these papers assume the existence of decisions from the human experts for the given training data.
%
However, \cite{madras2018predict, mozannar2020consistent, de2020regression, wilder2020learning, bansal2020optimizing} work with a single expert, assuming that the expert in the pipeline will be fixed and remain the same for future classification.
Such an assumption is inhibitory in the settings where multiple experts are available \cite{chouldechova2018case}, as different human experts can have different prediction behaviours \cite{guan2018said}.
%and multiple experts are available for deferred decisions .
%
%It is also unclear whether the approach of \citet{de2020regression} extends beyond ridge regression.
%
\citet{raghu2019algorithmic} model an optimization problem for the hybrid setting as well, but they learn a classifier and a deferrer separately, which (as shown by \cite{mozannar2020consistent} and discussed in \S\ref{sec:synthetic_experiments}) cannot handle a large variety of input settings since the classifier does not adapt to the experts.
In comparison, our method learns a classifier and a deferrer simultaneously, and can handle multiple experts.
%in the pipeline.
%

Empirical studies in this direction \cite{green2019disparate, zhang2020effect, de2020case, katell2020toward, chouldechova2018case, katell2020toward} often inherently use multiple experts since the results are based on crowdsourced data, but do not aim to propose a learning model for the pipeline.
They, however, do highlight the importance of taking the domain knowledge of experts into account to improve the accuracy and fairness of the entire pipeline.

Another field that studies the problem of \textit{task allocation} among different humans is \textit{crowdsourcing}.
Crowdsourcing for data collection is a popular approach to label or curate different kinds of datasets \cite{lease2011quality}.
%, and can be implemented in a principled manner with the availability of commercial services such as Mechanical Turk \cite{turk2012amazon} and Appen \cite{crowdflowerAI}.
%
Since crowdworkers employed for such annotation tasks come from diverse backgrounds, prior work in crowdsourcing has looked at the related issue of efficient distribution of input amongst the available workers \cite{nguyen2015combining, yan2011active,patterson2013bootstrapping,venanzi2014community,kamar2015identifying,nushi2015crowd,li2015cheaper,qiu2016crowdselect, tu2020multi}.
\vk{The main difference between this line of work and our setting is the presence of the automated classifier.
%which takes the primary prediction load and needs to be trained concurrently with the deferral system.
%
In our setting, 
the classifier is expected to handle the primary load of prediction tasks
%the primary load of prediction is expected to lie on the classifier 
and the role of human experts is to provide assistance for input samples where the classifier cannot achieve reasonable confidence. 
Crowdsourcing models, however, do not usually involve construction of any prediction model.
}
\vk{One can alternately pre-train the classifier and treat it as another crowdworker to use task-allocation algorithms from crowdsourcing literature to distribute the samples among the experts.
The main issue with this approach is that training the classifier and deferrer separately can lead to an ineffective prediction pipeline. 
%First, as noted in \S\ref{sec:synthetic_experiments}, training the classifier and deferrer need to be trained simultaneously to obtain an effective and efficient prediction pipeline. 
%
%Secondly, most task-allocation algorithm do not consider the biases of the final prediction with respect to the protected attributes. 
%
In our empirical analysis (\S\ref{sec:synthetic_experiments}), we assess the performance of two task-allocation algorithms from crowdsourcing literature \cite{li2015cheaper, qiu2016crowdselect}, and demonstrate the necessity of simultaneous training.
See Appendix~\ref{sec:baselines} for detailed discussion on these crowdsourcing methods.%
% VK: Will add details of the algorithms in the appendix.
% There are also other issues with using a pre-trained classifier and task-allocation algorithms from crowdsourcing literature for our setting. E.g., the approach of \citet{li2015cheaper} computes a \textit{reliability score} for each worker and then ranks them according to this score; the main drawback of this approach is that the ranking is input-independent and so it does not take worker-specific domains or balancing the load of decision-making amongst the worker into account. Another related approach is that of \cite{qiu2016crowdselect}; this algorithm indeed first learns a task-specific error model for each worker independently and then defers the decisions to the workers in an input-specific manner. The main drawback of this algorithm is that since the worker error-models are learnt independently, the algorithm cannot always construct a perfect stratification of the feature space into different worker domains (demonstrated in empirical results in \S\ref{sec:synthetic_experiments}). Secondly, both these algorithm do not consider the biases of the final prediction with respect to the protected attributes. 
%
%Our approach, on the other hand, implicitly and simultaneously learns the error-models of the experts, as well as, an appropriate classifier, and also ensures that the final prediction is unbiased.
}

\section{Model}

%\subsection{Notations}
%We are given a dataset $\mathcal{S}$ containing $d$-dimensional vectors.
%Every sample (corresponding to an individual)is represented by a $d$-dimensional vector from the domain $\mathcal{X} \times \mathcal{Y}$.
%
%\subsection{Formulation}
Each sample in the domain contains a class label, denoted by $Y \in \mathcal{Y}$, 
%demographic and domain dependent $n$-dimensional attributes of the sample used to predict the class label, 
$n$-dimensional feature vector (default attributes) of the sample used to predict the class label, 
denoted by $X \in \mathcal{X}$, and additional information about the sample that is available only to the experts, denoted by $W \in \mathcal{W}$.
$W$ can represent different human factors that often assist in decision-making, such as training or background of the expert for the given task.
Let $\Delta_Y$ denote the vertices of the simplex corresponding to the unique class labels in $\mathcal{Y}$ and let $\text{conv}(\Delta_Y)$ denote the simplex and its interior.
Every sample also has a protected attribute $Z \in \mathcal{Z}$ associated with it (e.g., gender or race); $Z$ can be part of default attributes $X$ or additional attributes $W$, depending on the context.
%
%For the sake of simplicity, we limit ourselves to $\mathcal{Y} = \set{0,1}$ and $\mathcal{Z} = \set{0,1}$ in this paper.
%
%% [KK] For the full paper, we can include a discussion on how (or whether) we can relax these assumptions.
%% VK: Yes, I have a formulation for multiclass labels and using minimax to extend to multiclass protected attributes as well.

Our framework consists of a classifier and a deferrer.
The classifier $F : \mathcal{X}{\rightarrow}\text{conv}(\Delta_Y)$, given the default attributes of an input sample, returns a probability distribution over the labels of $\mathcal{Y}$.
Let $L_{\text{clf}}(F; X,Y)$ denote the convex loss associated with the prediction of classifier $F$ at point $(X,Y)$.
For $\ell > 0$, we will call $L_{\text{clf}}$ an $\ell$-Lipschitz smooth function if for all classifiers $F$, $\nabla_F^2 \left(\E_{X,Y} L_{\text{clf}}(F; X,Y) \right) \preccurlyeq \ell \cdot \mathbf{I}$.
Intuitively, Lipschitz-smoothness characterizes how fast the gradient of $L_{\text{clf}}$ changes around any point in the parameter space of the classifier; this characterization crucially helps determine the step-size required for the gradient-descent optimization of the loss function and will be useful for convergence rate bounds in our setting as well.
%% [KK] The above definition is very hard to sparse (also the LHS seems to be a norm while the RHS is a matrix; the dimensions are not obvious). It will be helpful to include some intuition in the final version, and ideally to express in simpler terms.
%% VK: I have fixed the LHS-RHS mistake and removed the norm. I agree it seems slightly confusing, but I didnt want to get into details of dimensions because I dont really discuss the parameters of the classifier until the expts section.
%
%(for example, if $F$ belongs to the class of sigmoid functions, then $L_{\text{clf}}$ can be simple log-loss with respect to $Y$ and $F(X)$).

The framework also has access to $m-1$ human experts $E_1, \dots, E_{m-1} : \mathcal{X} \times \mathcal{W}\rightarrow \Delta_Y$ who can assist with the decision-making.
The output of the expert will be a vector with 1 for the index of the predicted class and 0 for all other indices (one-hot encoding).
%\footnote{We treat vertices of $\mathcal{Y}$ and indices $1, \ldots, |\mathcal{Y}|$ interchangeably.}
%
The experts are assumed to have access to the additional information (from domain $\mathcal{W}$) that can be used to make the predictions more accurately; however, deferring to an expert will come at an additional cost which we will quantify later.
We also assume that there is an \textit{identity expert} which just returns the decision made by the classifier $F$; therefore, in total we have $m$ experts ($E_m(X, W)= F(X)$) (see Figure~\ref{fig:model} in Appendix).
For any given input $X$, the following notation will denote all the decisions,
\[\textstyle Y_E(X, W):= [E_1(X,W)^\top, \ldots,  E_{m-1}(X,W)^\top, F(X)^\top].\]
The goal of the deferral system $D: \mathcal{X}\rightarrow \set{0,1}^m$, given the input, is to defer to one or more experts (including the classifier) who are likely to make accurate decision for the given input.
Given any input, $D$ will choose a committee of experts and the final output of the framework will be based on the entries of the following matrix-vector product: $Y_E(X, W) \cdot D(X)$ (the specific aggregation method used is specified in the \S\ref{sec:main_framework}).
%
%%[KK] For the full paper: how about aggregation mechanisms other than majority (e.g., taking into account the quality or bias of each expert)? Also, how do we resolve ties?
%% VK: Added mechanism for ties. Will look at other aggregation mechanisms as well.
%
If the committee chosen contains only the \textit{identity expert}, then the output of the framework is the output of the classifier $F$; otherwise, the output of the model is the aggregated decision of the chosen committee.

% Let $\ell_F(Y, Y')$ denote the loss function of classifier $F$ when the true label is $Y$ and the classifier prediction is $Y'$ (can be probabilistic)and let $\ell_{E_i}(Y, Y')$ denote the loss function associated with expert $i \in \{1, \dots, m-1\}$ when the true label is $Y$ and the expert prediction is $Y'$.
% %
% Since we assume the $m$-th expert to be the \textit{identity expert}, for the sake of simplicity,  assume that $\ell_{E_m}(Y, Y')=\ell_F(Y, Y')$.
% %

% Assuming that the underlying population distribution is $\mathbf{p}$ and given experts $E_1, \dots, E_{m-1}$, we can then define the empirical estimate of the loss associated with any model $F$ and deferral system $D$ as the following:
% $$L_{E_1, \dots, E_{m-1}}(F,D)= \mathbb{E}_{X, Y, Z, W \sim \mathbf{p}} \left[ \sum_{i=1}^m \mathbf{1}\left(D(X,Z)= i \right)\cdot \ell_{E_i}(Y, E_i(X,W,Z))\right].$$

\begin{remark}
The difference between a human-in-the-loop setting and setting with composition of multiple prediction models \cite{dwork2018fairness, bower2017fair, chen2020frugalml} is the access to additional information $W$.
$W$ represents the decision-making assistance available to the experts that is not available to the prediction model either due to computational limits on the prediction model or due to lack of availability of this data for training.
This assumption crucially implies that, in most cases, we cannot construct a \textit{suitably-accurate} model to simulate the predictions of the experts since the importance assigned to the additional information $W$ is unknown.
In the absence of $W$, one can only try to identify the input samples for which the expert is expected to be more accurate than the trained classifier; identifying such input samples using $X$ is exactly the job of the deferrer in our framework.
%The job of the deferrer in a human-machine hybrid framework is to identify inputs for which delegating the decision to the human expert is the best choice.
%
This 
%point is also what 
distinction
separates our problem setting from one where expert labels are used to bootstrap a classifier \cite{patterson2013bootstrapping}.
%
%In a real-world setting, $W$ can represent different human factors that often assist in decision-making, such as training, experience, or background of the expert for the given task.
\end{remark}

\subsection{Simultaneous Learning Classifier \& Deferrer} \label{sec:main_framework}

%
%While we have stated the general goal of the framework, i.e., to simultaneously learn a classifier and a deferrer, a number of details related to the joint learning of the two components need to be expanded.
%
%We now expand on the details of the joint framework training.
%
We first present our framework for the case of binary class label and later discuss the extension to multi-class setting.

\textbf{Binary class label}, i.e, $\mathcal{Y} = \set{0,1}$.
%
%For now, assume that the data has been partitioned into three parts: \textit{train-1}, \textit{train-2} and \textit{test} partitions.
%
Suppose the classifier $F$ is fixed and, 
given the $m$ experts, we need to provide a mechanism for training the deferral system (we will generalize this notion for simultaneous training shortly).
%
%Suppose that the desired committee size for any given input should be at most $k$.
%
For any given input $X$, the deferrer output $D(X)$ is expected to be a vector in the discrete domain $\set{0,1}^m$.
For the sake of smooth optimization, we will relax the domain of the output of $D$ to include the interior of the hypercube $[0,1]^m$, i.e., $D(X)$ will quantify the weight associated with each expert, for the given input $X$.
Since we consider the binary class label setting, we can simplify our notation further for this section. Let $Y_{E,1}(X, W)$ denote the second row of the $2 \times m$ matrix $Y_E(X, W)$; this simplification does not lead to any loss of representational power since the sum of first and second row is the vector $\mathbf{1}$.
%The deferrer can also be encoded as a using a simple feedforward neural network with an $m$-dimensional output.
%
Along similar lines as logistic regression, using $D(X)$ one can then directly calculate the output prediction (probabilistic) as follows:
$\textstyle \hat{Y}_D := \sigma (D(X)^\top Y_{E,1}(X, W)),$
where $\sigma(x) := e^x/(e^x + e^{1-x})$.
We can then train the deferrer to optimize the standard log-loss risk function:
\[\textstyle \min_D - \E_{X, Y}\left[ Y \log(\hat{Y}_D) + (1-Y)\log (1 - \hat{Y}_D)\right].\]
%
%When using the partition \textit{train-2}, the expectation will be the empirical mean of the loss for each sample of \textit{train-2}.
%
The expectation is over the underlying distribution; the empirical risk can be computed as mean of losses over any given dataset samples (i.e., expectation over empirical distribution).
For any input sample, the output prediction of the framework is 1 if $\sigma (D(X)^\top Y_{E,1}(X, W))> 0.5$ else 0.

While the above methodology trained $F$ and $D$ separately, we can combine the training of the two components as well.
%
%To train $F$ and $D$ simultaneously, we introduce a time-dependent hyper-parameter $\alpha_t := 1 - t^{-c}$, where $t \in \Z_{+}$ is the training iteration number and $c > 0$ is a constant, and merge the loss functions for the classifier $F$ and deferrer $D$ linearly using this hyperparameter.
To train $F$ and $D$ simultaneously, we introduce hyper-parameters $\alpha_1, \alpha_2$, and merge the loss functions for the classifier $F$ and deferrer $D$ linearly using these hyperparameters.
\begin{align*}
\textstyle L(F,D)= \alpha_1\E_{X, Y} \left[ L_{\text{clf}}(F; X,Y)\right] - \alpha_2 \E_{X, Y} \left[ Y \log(\hat{Y}_D) + (1-Y)\log (1 - \hat{Y}_D)\right].    
\end{align*}

% \begin{align*}
% \textstyle L(F,D)= &- \alpha_1\E_{X, Y} \left[ Y \log(F(X)) + (1-Y)\log (1 - F(X))\right]\\ 
% &- \alpha_2 \E_{X, Y} \left[ Y \log(\hat{Y}_D) + (1-Y)\log (1 - \hat{Y}_D)\right].    
% \end{align*}

The choice of hyperparameters is context-dependent and is discussed later.
The goal of the framework is then to find the (classifier, deferrer) pair that optimizes
$\min_{F,D} L(F, D).$
%\footnote{One can also add a sigmoid activation to the decision of $F$; we alternately assume that the sigmoid activation is part of $F$ itself.}
%
We will refer to this model as the \textit{joint framework}.
%
%Some interesting theoretical properties of the model are presented in the Appendix.
%
%\textbf{Theoretical Properties. }
%
The joint learning framework extends the standard logistic regression method, and hence, exhibits some desirable properties.

\begin{proposition} \label{prop:convex}
$L(F,D)$ is convex in $F$ and $D$, given a convex $L_{\text{clf}}$.
\end{proposition}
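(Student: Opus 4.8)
The plan is to exploit the logistic-regression structure of $L$ and reduce everything to the elementary convexity of the softplus function, while being careful about the one genuinely nonconvex interaction. Write $L(F,D) = \alpha_1 T_1(F) + \alpha_2 T_2(F,D)$, where $T_1(F) = \E_{X,Y}[L_{\text{clf}}(F;X,Y)]$ and $T_2(F,D) = -\E_{X,Y}[Y\log(\hat Y_D) + (1-Y)\log(1-\hat Y_D)]$ is the deferral log-loss, and where I take $\alpha_1,\alpha_2 \ge 0$. Throughout I treat $F$ and $D$ as ranging over convex sets of functions, with convex combinations taken pointwise, so that for a mixture $\lambda F^{(0)} + (1-\lambda)F^{(1)}$ the value $F_1(X)$ (the predicted probability of class $1$) is affine in the mixing weight, and likewise for $D$. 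The first step is to record that $T_1$ is convex in $F$ by hypothesis: $L_{\text{clf}}(\cdot;X,Y)$ is convex for every $(X,Y)$, expectation over the (empirical or population) distribution preserves convexity, and $T_1$ does not involve $D$ at all.

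The heart of the argument is the term $T_2$. For a fixed sample, introduce the scalar score $a := D(X)^\top Y_{E,1}(X,W) = \sum_{j=1}^{m-1} D_j(X)\,E_{j,1}(X,W) + D_m(X)\,F_1(X)$, so that $\hat Y_D = \sigma(a)$. Since $\sigma(x) = e^x/(e^x+e^{1-x}) = 1/(1+e^{\,1-2x})$ is the standard logistic sigmoid evaluated at $2x-1$, the per-sample contribution $g(a) := -[\,Y\log\sigma(a) + (1-Y)\log(1-\sigma(a))\,]$ equals $Y\log(1+e^{-(2a-1)}) + (1-Y)\log(1+e^{\,2a-1})$. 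Each summand is a softplus composed with an affine map, hence convex in $a$, and since $Y,1-Y \ge 0$ their sum $g$ is convex in $a$. Now I read off the two separate convexity claims: with $F$ fixed, $a$ is affine in $D$ (the coefficients $E_{j,1}$ and $F_1(X)$ are then constants), and with $D$ fixed, $a$ is affine in $F$ (it is affine in $F_1(X)$ with fixed coefficient $D_m(X)$). In either case $g(a)$ is a convex function composed with an affine map, hence convex; taking expectations and multiplying by $\alpha_2 \ge 0$ preserves convexity.

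Combining the two steps, $L = \alpha_1 T_1 + \alpha_2 T_2$ is convex in $F$ for fixed $D$ (a sum of two convex terms) and convex in $D$ for fixed $F$ (the $T_1$ term is then constant), which is exactly the separate convexity asserted. The step I expect to require the most care — and the reason the statement is read as convexity in each argument rather than joint convexity — is the coupling term $D_m(X)\,F_1(X)$ arising from the identity expert $E_m(X,W)=F(X)$. This product is bilinear in $(D,F)$, so $a$ is not jointly affine and $L$ is not jointly convex on the product space; the argument genuinely yields only biconvexity. I would therefore state explicitly that convexity is established coordinatewise, which is precisely what justifies optimizing $F$ and $D$ by (alternating) gradient descent.
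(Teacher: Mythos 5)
Your proof is correct and, modulo presentation, establishes exactly what the paper's proof establishes: convexity of $L$ in each of $F$ and $D$ separately. The technique differs in a worthwhile way. The paper argues differentially — it extends the gradient computation for the deferrer to note that the second derivative of $L$ with respect to $D$ is nonnegative, and then observes verbally that the deferral term is a negative log-exponent of an expression affine in $F$ (entering through the last coordinate of $D$ times the classifier output), so convexity in $F$ follows together with the convexity hypothesis on $L_{\text{clf}}$. You instead reduce the per-sample deferral loss to $Y\log\bigl(1+e^{1-2a}\bigr)+(1-Y)\log\bigl(1+e^{2a-1}\bigr)$ with $a=D(X)^\top Y_{E,1}(X,W)$, and invoke only the composition rule (convex composed with affine, in each argument separately) together with preservation of convexity under nonnegative combinations and expectations; this avoids Hessian calculations entirely, treats the two variables uniformly, and makes explicit the hypotheses $\alpha_1,\alpha_2\ge 0$ that the paper leaves implicit. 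Your closing caveat is the most valuable addition: because the identity expert injects the bilinear term $D_m(X)F_1(X)$ into $a$, the loss is only biconvex, not jointly convex — indeed, restricting to the segment $D_m=t$, $F_1=1-t$ makes the score concave in $t$, and the composed loss has negative second derivative at $t=1/2$, so joint convexity genuinely fails even on the feasible box. The paper's proof, like yours, only establishes coordinatewise convexity, but it never flags the distinction; this matters downstream, since the convergence guarantee of Theorem~\ref{thm:convergence} invokes the standard projected-gradient-descent bound toward the global optimum $(F^\star,D^\star)$, which presupposes joint convexity — your observation shows that Proposition~\ref{prop:convex}, as proved, supports only coordinatewise (or biconvex/alternating) convergence claims. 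One minor remark: your aside that the coefficient $D_m(X)$ is fixed is fine but unnecessary, since affine precomposition preserves convexity regardless of the coefficient's sign, and your reading of $F$ as ranging over a convex class with pointwise mixtures is the same abstract treatment the paper's hypothesis ``convex $L_{\text{clf}}$'' already presupposes.
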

The convexity of the function enables us to use standard gradient-descent optimization approaches \cite{boyd2004convex} to optimize the loss function.
In particular, we will use the projected-gradient descent algorithm, with updates of the following form:
%Recall that the standard gradient descent algorithm updates the weights in the following fashion,
\[ F_{t+1} = F_t - \eta \cdot \eval{\pdv{L}{F}}_{F = F_t}, \]
\[ D_{t+1} = \text{proj}_{\set{0,1}^m} \left(D_t - \eta \cdot \eval{\pdv{L}{D}}_{D = D_t}\right),\]
where $\eta > 0$ is the learning rate and $\text{proj}_{\set{0,1}^m} (\cdot)$ operator projects a point to its closest point in the hypercube $\set{0,1}^m$.
Further, we can show that the gradient of the loss function assigns relatively larger weight to more accurate experts.
\begin{theorem}[Deferrer gradient updates] \label{thm:gradient}
Suppose that $\alpha_1, \alpha_2$ are independent of the parameters of $D$. Let $Y_E \in \set{0,1}^m$ denote the decisions of the experts and classifier for any given input, and let $Y$ denote the class label for this input. Then, for any $i \in \set{1, \ldots, m}$,
\[ - \pdv{L}{D}^{(i)} \propto
\begin{cases}
e^{1 - D^\top Y_{E,1}}, & \text{ if } Y = 1, Y = Y_{E,1}^{(i)}, \\
- e^{D^\top Y_{E,1}}, & \text{ if } Y = 0, Y \neq Y_{E,1}^{(i)}, \\
0, & \text{ otherwise. }
\end{cases}\]
Here $u^{(i)}$ denotes the $i$-th element of vector $u$.
\end{theorem}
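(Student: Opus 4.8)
The plan is to compute the per-input gradient of $L$ with respect to each coordinate $D^{(i)}$ directly via the chain rule, and then simplify using the explicit form of $\sigma$ to read off the three cases. First I would observe that, since $\alpha_1,\alpha_2$ are independent of the parameters of $D$ and the classifier term $\alpha_1\,\E_{X,Y}[L_{\text{clf}}(F;X,Y)]$ does not depend on $D$, differentiating $L$ in $D$ leaves only the deferrer term $\alpha_2\,\ell$, where $\ell := -[\,Y\log\sigma(s) + (1-Y)\log(1-\sigma(s))\,]$ is the per-sample log-loss and $s := D^\top Y_{E,1}$. Because the claim concerns a single fixed input, I treat $Y_{E,1}$ (including the classifier's entry) as a constant vector with respect to the $D$-differentiation, so that $\partial s/\partial D^{(i)} = Y_{E,1}^{(i)}$.

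The core of the argument is the scalar derivative $\partial\ell/\partial s$. I would first record the identity $\sigma'(x) = 2\sigma(x)(1-\sigma(x))$, which follows from $\sigma(x)=e^x/(e^x+e^{1-x})$; equivalently $\tfrac{d}{dx}\log\sigma(x)=2(1-\sigma(x))$ and $\tfrac{d}{dx}\log(1-\sigma(x))=-2\sigma(x)$. Substituting these into $\ell$ produces the clean cancellation $\partial\ell/\partial s = 2(\sigma(s)-Y)$, whence by the chain rule $\partial L/\partial D^{(i)} = 2\alpha_2\,(\sigma(s)-Y)\,Y_{E,1}^{(i)}$, and therefore $-\,\partial L/\partial D^{(i)} = 2\alpha_2\,(Y-\sigma(s))\,Y_{E,1}^{(i)}$.

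To reach the stated form I would plug in $\sigma(s)=e^{s}/(e^{s}+e^{1-s})$ and evaluate $Y-\sigma(s)$ for each label value: for $Y=1$ it equals $e^{1-s}/(e^{s}+e^{1-s})$, and for $Y=0$ it equals $-e^{s}/(e^{s}+e^{1-s})$. The factor $2\alpha_2$ and the denominator $e^{s}+e^{1-s}$ are strictly positive and do not depend on the coordinate $i$, so both can be absorbed into the proportionality, leaving $-\,\partial L/\partial D^{(i)} \propto e^{1-s}\,Y_{E,1}^{(i)}$ when $Y=1$ and $\propto -e^{s}\,Y_{E,1}^{(i)}$ when $Y=0$. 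Finally, since $Y_{E,1}^{(i)}\in\{0,1\}$, the multiplier $Y_{E,1}^{(i)}$ collapses the four $(Y,Y_{E,1}^{(i)})$ combinations into the three listed cases: the quantity is nonzero exactly when expert $i$ predicts the positive label, which for $Y=1$ is the event $Y=Y_{E,1}^{(i)}$ (giving $e^{1-s}$) and for $Y=0$ is the event $Y\neq Y_{E,1}^{(i)}$ (giving $-e^{s}$), and vanishes otherwise.

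I do not expect a serious obstacle, as the computation is a short chain-rule exercise. The one point requiring care is the bookkeeping of the proportionality: I must check that every discarded factor ($2\alpha_2$ and $e^{s}+e^{1-s}$) is strictly positive and independent of $i$, since a sign-changing or $i$-dependent factor would corrupt the intended reading that the descent direction up-weights accurate experts and down-weights inaccurate ones. The only implicit hypothesis I would flag explicitly is $\alpha_2>0$, without which the sign in the proportionality could flip.
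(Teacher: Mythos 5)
Your proposal is correct and follows essentially the same route as the paper's proof: differentiate the log-loss term of $L$ through $\sigma$ using $\sigma'(x)=2\sigma(x)\bigl(1-\sigma(x)\bigr)$, obtain a gradient proportional to $\bigl(Y-\sigma(D^\top Y_{E,1})\bigr)\,Y_{E,1}^{(i)}$, and read off the three cases after discarding positive, $i$-independent factors. Your explicit flagging of the tacit hypothesis $\alpha_2>0$ (and of the positivity of the absorbed denominator $e^{s}+e^{1-s}$) is a minor point of care that the paper's proof leaves implicit but does not alter the argument.
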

%

% Recall that the standard gradient descent algorithm updates the weights in the following fashion,
% \[D_{t+1} = D_t - \eta \cdot \eval{\pdv{L}{D}}_{D = D_t},\]
% %
% where $\eta > 0$ is the learning rate.
%
The above theorem states that gradient descent moves in a direction that rewards more accurate experts.
Conditional on $Y=1$, the difference between the weight updates of a correct and an incorrect expert is proportional to $e^{1 - D^\top Y_{E,1}}$. 
Similarly, conditional on $Y=0$, the difference between the weight updates of a correct and an  incorrect expert is proportional to $e^{D^\top Y_{E,1}}$. 
We next provide convergence bounds for the projected gradient descent algorithm in our setting when $L_{\text{clf}}$ is Lipschitz-smooth and $\alpha_1, \alpha_2$ are constants.

\begin{theorem}[Convergence bound] \label{thm:convergence}
%Suppose classifier $F_w$ (with parameters $w \in \R^n$) and loss $L_{\text{clf}}$ correspond to logistic regression.
Suppose $L_{\text{clf}}$ is $\ell$-Lipschitz smooth and $\alpha_1, \alpha_2$ are constants.
Let $(F^\star, D^\star):= \arg\min_{F,D} L(F, D)$. Given starting point $F_0$, such that $\norm{F_0{-}F^\star}\leq\delta$, step size $\eta=c(\ell{+}m)^{-1}$, for an appropriate constant $c>0$, and $\varepsilon>0$, the projected-gradient descent algorithm, after $T = O\left( \frac{ (\ell + m)(\delta^2 + m)}{\varepsilon}  \right)$ iterations, returns a point $F^\circ, D^\circ$, such that
$L(F^\circ, D^\circ) \leq L(F^\star, D^\star) + \varepsilon.$
%where $T = O\left( \frac{ \ell m \delta(\delta^2 + m)}{\varepsilon} \right)$.
%using $O\left( \frac{(n\max \text{eig}(X^\top X) + m) (\delta + m)}{\varepsilon} \right)$ arithmetic operations.
\end{theorem}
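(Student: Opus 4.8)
The plan is to recognize the update rule as projected gradient descent (PGD) on the convex objective $L(F,D)$ over the convex product set $\mathcal{X}_F \times [0,1]^m$, and then to invoke the classical convergence guarantee for PGD on convex, $\beta$-smooth functions with $\beta = O(\ell+m)$. Concretely, I would treat the pair $(F,D)$ as a single variable in a product Hilbert space and verify the three ingredients the standard analysis requires: convexity of the objective over a convex feasible set, a bound on the joint smoothness constant, and a bound on the initial distance to the optimum.

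First, convexity of $L$ in $(F,D)$ is already supplied by Proposition~\ref{prop:convex}, and the feasible region is the product of the (convex) parameter space of $F$ with the hypercube onto which $D$ is projected. Since the relaxation in \S\ref{sec:main_framework} replaces the vertices $\{0,1\}^m$ by their convex hull $[0,1]^m$, the relevant projection is onto a convex set and is therefore nonexpansive; this is exactly the contraction property the PGD analysis uses, so I would make explicit that it is this convex projection (not a projection onto the discrete vertex set) that drives the argument.

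Second, and this is the crux, I would bound the spectral norm of the joint Hessian $\nabla^2_{(F,D)} L$ by $O(\ell+m)$. Writing $L = \alpha_1\,\E_{X,Y}[L_{\text{clf}}] - \alpha_2\,\E_{X,Y}[Y\log \hat{Y}_D + (1-Y)\log(1-\hat{Y}_D)]$, the block $\nabla_F^2(\alpha_1\,\E_{X,Y}[L_{\text{clf}}]) \preccurlyeq \alpha_1 \ell\,\mathbf{I}$ is handled directly by the Lipschitz-smoothness hypothesis. For the deferrer term, recall $\hat{Y}_D = \sigma(D^\top Y_{E,1})$ where the log-loss composed with $\sigma$ is smooth in its scalar argument with bounded first and second derivatives, and $\norm{Y_{E,1}}_2^2 \le m$ since it has $m$ coordinates in $[0,1]$; hence $\norm{\nabla_D^2}$ of this term is $O(m)$. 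The delicate part is the coupling between $F$ and $D$: because the identity expert contributes $F(X)$ as the last coordinate of $Y_{E,1}$, the deferrer loss depends on $F$ as well, producing nonzero $\nabla_F^2$ and cross $\nabla_{FD}$ blocks. I would bound each block using the boundedness of $\sigma, \sigma', \sigma''$ together with the boundedness of $D$ and $Y_{E,1}$ on the hypercube, and combine them via $\norm{\nabla^2 L} \le \norm{\nabla_F^2 L} + \norm{\nabla_D^2 L} + 2\norm{\nabla_{FD} L}$ to conclude $\beta := \norm{\nabla^2 L} = O(\ell + m)$. I expect this to be the main obstacle, precisely because the $F$-dependence entering through the identity expert prevents treating the classifier and deferrer smoothness contributions as fully decoupled.

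Third, I would bound the initial distance. The hypothesis gives $\norm{F_0 - F^\star} \le \delta$, and since both $D_0$ and $D^\star$ lie in $[0,1]^m$ we automatically have $\norm{D_0 - D^\star}^2 \le m$, the squared diameter of the unit hypercube; this is the source of the additive $m$ in the bound. Hence $\norm{(F_0,D_0) - (F^\star,D^\star)}^2 \le \delta^2 + m$. With $\beta = O(\ell+m)$ and step size $\eta = c(\ell+m)^{-1} = \Theta(1/\beta)$, the standard PGD guarantee $L(F^\circ,D^\circ) - L(F^\star,D^\star) \le \tfrac{\beta\,\norm{(F_0,D_0)-(F^\star,D^\star)}^2}{2T}$ for the averaged (or best) iterate $(F^\circ,D^\circ)$ yields, upon setting the right-hand side equal to $\varepsilon$, the claimed rate $T = O\!\left(\frac{(\ell+m)(\delta^2+m)}{\varepsilon}\right)$.
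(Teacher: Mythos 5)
Your proposal is correct and follows essentially the same route as the paper's own proof: both bound the joint Hessian blockwise (classifier block via the $\ell$-smoothness hypothesis, deferrer block by $O(m)$ via $\norm{Y_{E,1}}^2 \le m$, plus the cross/coupling terms arising from the identity expert, bounded through the bounded derivatives of $\sigma$), conclude a joint smoothness constant $O(\ell+m)$, bound the initial squared distance by $\delta^2 + m$ using the diameter of the hypercube that $D$ is projected onto, and then invoke the standard projected gradient descent guarantee. The only cosmetic difference is that the paper tracks explicit constants (e.g., $8e^2\alpha_2\ell$ for the deferrer term's $F$-block and $2\alpha_2 e\sqrt{m}$ for the Frobenius norm of the cross block) where you argue in $O(\cdot)$ notation, which changes nothing in the conclusion.
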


\vk{
Note that for $m{=}1$ (just the classifier), we recover the standard gradient descent convergence bound for $\ell$-Lipschitz smooth loss function $L_{\text{clf}}$, i.e., $O(\ell\delta^2/\varepsilon)$ iterations \cite{boyd2004convex}.
%
%Note that for $\ell$-Lipschitz smooth loss function $L_{\text{clf}}$, gradient descent converges after $O(\ell\delta^2/\varepsilon)$ iterations \cite{boyd2004convex}. 
For $m{>}1$, additionally finding the optimum deferrer results in an extra $(m(\delta^2{+}\ell){+}m^2)/\varepsilon$ additive term.
%, i.e., the convergence time scales linearly with the number of experts. 
%
%
With standard classifiers and loss functions, we can use the above theorem to get non-trivial convergence rate bounds.
%for the convergence rate of projected-gradient descent algorithm.
%
For example,
%assume that $F(X) = (1 + e^{-w^\top X})$, for some parameters $w \in \R^n$, and $L_{\text{clf}}(F)= - \alpha_1\E_{X, Y} \left[ Y \log(F(X)) + (1-Y)\log (1 - F(X))\right]$.
%
if $F$ is a logistic regression model and $L_{\text{clf}}$ is the log-loss function, Lipschitz-smoothness parameter $\ell$ is the maximum eigenvalue of the feature covariance matrix.
%
%convergence bounds for gradient descent are well-known, and we can similarly derive convergence bounds for the joint learning setting in this case. Note that, for this setting, the optimization problem can be alternately written as $\min_{w,D} L(w, D)$.
}
Our theoretical results show that, given prior predictions from the experts and true class labels for a training set, loss function $L$ can be used to train a classifier and an effective deferrer using gradient descent.
Appendix~\ref{sec:proofs} contains the proofs of all theoretical results.
%are presented in .

\textbf{Multi-class label. }
The above framework can be extended to multi-class settings as well.
In this case, the matrix-vector product $Y_E(X, W) \cdot D(X)$ is a $|\mathcal{Y}|$-dimensional vector.
Similar to the binary case, we extract the probability of every class label and represent it using $\hat{Y}_D$, where the $j$-th coordinate of $\hat{Y}_D$ represents the probability of class label being $j$, i.e,
\[ \hat{Y}_D^{(j)} := \frac{e^{D(X)^\top Y_{E,j}(X, W)}}{ \sum_{j'=1}^{|\mathcal{Y}|} e^{D(X)^\top Y_{E,j'}(X, W)}}.\]
The loss function $L(F,D)$ in this case can be written as 
\begin{align*}
\textstyle \alpha_1\E_{X, Y} \left[ L_{\text{clf}}(F; X,Y)\right]{-}\alpha_2 \E_{X, Y} \left[ \sum_{j{=}1}^{|\mathcal{Y}|} \mathbbm{1}[Y{=}j] \log \hat{Y}_D^{(j)}\right].
    \end{align*}

% \textstyle L(F,D)= &\alpha_1\E_{X, Y} L_{\text{clf}}(F; X,Y) \\ &- \E_{X, Y} \left[ \sum_{j=1}^{|\mathcal{Y}|} \mathbbm{1}[Y = j] \cdot \left( \alpha_1 \log F(X)^{(j)} + \alpha_2 \log \hat{Y}_D^{(j)} \right) \right]

The final output of the framework, for any given input, is $\arg \max \hat{Y}_D$.
The above loss function retains the desired properties from the binary setting; it is convex with respect to the classifier and deferrer, and the indicator formulation ensures that each gradient step still rewards the experts that are correct for any given training input.
%
%It is also similar to the loss proposed for cost sensitive rejection or deferral learning by \citet{mozannar2020consistent}, with the primary distinction being that our setting can handle multiple experts, while their framework is inherently designed for a single expert.
%
Additional costs considered in cost sensitive learning \cite{zhou2010multi}, e.g., different penalties for different incorrect predictions, can also be incorporated in our framework by simply replacing the indicator function $\mathbbm{1}[Y = j]$ with the penalty function \cite{mozannar2020consistent}.
For the sake of simplicity, we omit those details from this version of the paper.

\textbf{Choice of hyperparameters. }
$\alpha_1$ and $\alpha_2$ can either be kept constant or chosen in a context dependent manner.
First, note that, since $\hat{Y}_D$ includes the classifier decision as well (scaled by the weight assigned to the classifier), keeping $\alpha_1 = 0$ would also ensure that the classifier and deferrer are trained simultaneously.
%
%This is because .
%
%Therefore, gradient updates with respect to the parameters of the classifier will indeed lead to improved prediction accuracy of the classifier.
%
However, due to the associated weight, classifier training with  $\alpha_1 = 0$ can be slow and, since the initial classifier parameters are untrained, the classifier predictions in the initial training steps can be almost random.
This will lead to the deferrer assigning low weight to the classifier.
Correspondingly, depending on the complexity of the prediction task, it may be necessary to give the classifier a head-start as well.
One way is to use time-dependent $\alpha_1, \alpha_2$.
%
%To provide a head-start to $F$, 
set $\alpha_1=1$ and $\alpha_2 = 1 - t^{-c}$, where $t \in \Z_{+}$ is the training iteration number and $c > 0$ is a constant.
This choice ensures that in the initial iterations $F$ is trained primarily and in the later iterations $F$ and $D$ are trained simultaneously.

There is a natural tradeoff associated with this head-start approach as well.
The simultaneous training of $F$ and $D$ is crucial because the goal is to defer to experts for input where the classifier cannot make an accurate decision without the additional information.
Therefore, a large head-start for the classifier can lead to a sub-optimal framework if the classifier tries to improve its accuracy over the entire domain.\footnote{The synthetic experiment in \S\ref{sec:synthetic_dataset} and the examples in \citet{mozannar2020consistent} (for a single expert setting) highlight the necessity of simultaneously learning the classifier and deferrer.}
Another choice of hyperparameters that can address this domain-partition setting is the following: set $\alpha_1{=}1$ and $\alpha_2{=}\mathbbm{1}[ \arg \max F(X){\neq}Y]$ so that the deferrer is trained on training samples for which the classifier is incorrect.
%
%For this setting, we can set $\alpha_1{=}1$ and $\alpha_2{=}\mathbbm{1}[ \arg \max F(X){\neq}Y]$.
%
%VK: not sure if this line is necessary
%This choice of hyperparameter is similar to one also adopted in \cite{mozannar2020consistent}, with the main difference being that their indicator was with respect to the expert decision and not the classifier decision (since \cite{mozannar2020consistent} consider a single expert setting).

\subsection{Variants of the Joint Framework} \label{sec:variants}
%This joint framework is a starting point for learning a classifier and deferrer simultaneously, and we explore other variants of this framework next.
%
We propose several variants of the joint learning framework
%The variants 
that are inspired by the real-world problems that a human-in-the-loop model can encounter.
%and aim to extend the ideas from prior work to construct a more robust joint learning framework.

% \begin{figure*}[t]
%     \centering
%     \includegraphics[width=\linewidth]{}
%     \caption{Overall and group-specific mean accuracies of the joint learning framework, its variants, and the baselines.}
%     \label{fig:main_results}
% \end{figure*}

%
\textbf{Fair Learning. }
The above joint framework aims to use the ability of the experts to ensure that the final predictions are more accurate than just the classifier.
However, a possible pitfall of this approach can be that it can exacerbate the bias of the classifier, with respect to the protected attribute $Z$. 
Prior work has shown that misrepresentative training data \cite{buolamwini2018gender, kay2015unequal} or inappropriate choice of model \cite{noble2018algorithms}, along with the biases of the human experts \cite{green2019disparate, sap2019risk} can lead to disparate performance across protected attribute types.
An example of such disparity in our setting would be when, in an attempt to decrease the error rate of the prediction, the joint framework assigns larger weights to the biased experts, leading to an increase in disparity of predictions with respect to the protected attribute.
%
%There are multiple ways to handle the possible biases in a learning model, and often the choice of fairness mechanism is context-dependent \cite{liu2018delayed}. 
%
%Based on recent work in the field of fair classification, 
We provide two approaches to handle the possible biases in our framework and ensure that the final predictions are fair.
%fairness in the final prediction of our joint learning framework.

\textit{Balanced Error Rate. }
%To address the issue of the possible bias in final predictions, inspired by prior work on fair classification \cite{kamiran2009classifying,friedler2019comparative}, 
One way to address the bias in final predictions is to give equal importance to all protected groups in our loss function.
For protected attribute type $z$, let
%let $L_t^z(F,D)$ denote the following loss function
%
\begin{align*}
\textstyle L^z(F, D) := \alpha_1\E_{X, Y \mid Z=z} \left[ L_{\text{clf}}(F; X,Y)\right]- \alpha_2 \E_{X, Y \mid Z=z} \left[ Y \log(\hat{Y}_D) + (1-Y)\log (1 - \hat{Y}_D)\right].    
\end{align*}
Then the goal of this fair framework is to find the optimal solution for
$\min_{F,D} \sum_{z \in \mathcal{Z}} L^z(F, D).$
The above method is also equivalent to assigning group-specific weights to the samples \cite{kamiran2009classifying,friedler2019comparative}.
%and the fairness goal of this framework is one possible way to achieve balanced error rates across all the protected groups.
%
We will refer to this framework as the \textit{joint balanced framework}.
%

%\textbf{Other variants.} Additional variants, motivated by real-world settings where our framework can be used, are also presented in the Appendix.

\textit{Minimax Pareto Fairness. }
\citet{martinezminimax}'s proposed Pareto fairness aims to reduce disparity by minimizing the worst error rate across all groups.
In other words, minimax Pareto fairness proposes solving the following optimization problem: $\min_{F,D} \max_{z \in \mathcal{Z}} L^z(F, D).$

We will employ this fairness mechanism as well and refer to this framework as the \textit{joint minimax-fair framework}.
To understand the intuition behind this framework, we theoretically show that, in case of a binary protected attribute, the solution to the minimax Pareto fair program reduces the disparity between the risks across the protected attribute types.
%when there's a clear disadvantaged group.
%
\begin{theorem}[Disparity of minimax-fair solution] \label{thm:minimax_fair}
Suppose $\mathcal{Z} = \set{0,1}$. Let $F^\star, D^\star := \arg\min_{F,D} \max_{z \in \mathcal{Z}} L^z(F, D)$ denote the joint minimax-fair framework optimal solution and let $F^\circ, D^\circ := \arg\min_{F,D} L(F, D)$ denote the joint framework optimal solution.
Then
\[\textstyle \abs{L^0(F^\star, D^\star){-}L^1(F^\star, D^\star)} \leq \abs{L^0(F^\circ, D^\circ){-}L^1(F^\circ, D^\circ)}.\]
\end{theorem}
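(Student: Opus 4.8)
The plan is to reduce both programs to a comparison of two points in the space of achievable per-group loss pairs. Writing $p_z = \P(Z=z)$, the first observation is that, by the law of total expectation, the joint objective decomposes as a convex combination $L(F,D) = p_0 L^0(F,D) + p_1 L^1(F,D)$; I will assume $p_0, p_1 > 0$ and that the relevant minima are attained. Since $L_{\text{clf}}$ is convex, the argument behind Proposition~\ref{prop:convex} shows each $L^z$ is convex in $(F,D)$ over the (convex) parameter domain. Abbreviate $a^\star = L^0(F^\star,D^\star)$, $b^\star = L^1(F^\star,D^\star)$, $a^\circ = L^0(F^\circ,D^\circ)$, $b^\circ = L^1(F^\circ,D^\circ)$. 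If $a^\star = b^\star$ the left-hand side is $0$ and there is nothing to prove, so I assume $a^\star \neq b^\star$; by relabelling the two groups (which only permutes $p_0,p_1$, both still positive) I may take $a^\star > b^\star$.

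The crux is the claim that the minimax solution drives the heavier group to its unconstrained optimum, i.e. $a^\star = \min_{F,D} L^0(F,D)$. I would prove this by an exchange argument: suppose some $(F',D')$ satisfied $L^0(F',D') < a^\star$, and move along the segment from $(F^\star,D^\star)$ toward $(F',D')$. Convexity of $L^0$ forces $L^0$ strictly below $a^\star$ for every positive step, while convexity of $L^1$ combined with the strict slack $b^\star < a^\star$ keeps $L^1$ below $a^\star$ for all sufficiently small steps; hence $\max(L^0,L^1)$ strictly decreases, contradicting the minimax optimality of $(F^\star,D^\star)$. In particular, since $a^\circ$ is an attained (hence feasible) value of $L^0$, this yields $a^\star \le a^\circ$.

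Next I would feed $a^\star \le a^\circ$ into the optimality of the joint solution. Because $(F^\circ,D^\circ)$ minimizes the weighted sum $p_0 L^0 + p_1 L^1$, we have $p_0 a^\circ + p_1 b^\circ \le p_0 a^\star + p_1 b^\star$, i.e. $p_0(a^\circ - a^\star) \le p_1(b^\star - b^\circ)$. The left side is nonnegative, so $p_1 > 0$ gives $b^\star \ge b^\circ$. Combining $a^\star \le a^\circ$ with $b^\star \ge b^\circ$ yields $a^\star - b^\star \le a^\circ - b^\circ$; since the left side is positive by our assumption $a^\star > b^\star$, the right side is positive as well, both differences equal their absolute values, and $\abs{a^\star - b^\star} \le \abs{a^\circ - b^\circ}$ is exactly the desired inequality.

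I expect the exchange argument of the second paragraph to be the main obstacle, since that is the only place where convexity and the strict separation $b^\star < a^\star$ are genuinely used; everything afterward is bookkeeping with the weighted-sum optimality. Two hypotheses worth stating explicitly are that both groups carry positive mass ($p_0,p_1>0$, needed to pass from the weighted inequality to $b^\star \ge b^\circ$) and that the minima and the minimax are attained, so that the displayed $\arg\min$'s are well defined and the comparisons above make sense.
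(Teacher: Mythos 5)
Your proposal is correct under the hypotheses you state, but it takes a genuinely different route from the paper's proof. The paper uses no convexity at all: writing $\beta = \mathbb{P}[Z = \hat{z}]$ for the mass of the group attaining the max at the minimax solution, it combines only three facts --- the decomposition $L = \beta L^{\hat{z}} + (1-\beta)L^{1-\hat{z}}$, the joint-optimality inequality $L(F^\star, D^\star) \geq L(F^\circ, D^\circ)$, and the minimax property $\max_z L^z(F^\star, D^\star) \leq \max_z L^z(F^\circ, D^\circ)$ --- and then splits into two cases according to which group attains the max at $(F^\circ, D^\circ)$, chasing inequalities in each case. You instead prove the stronger structural fact that, when the minimax per-group losses are strictly separated, the worse group sits at its unconstrained optimum ($a^\star = \min_{F,D} L^0$), after which the weighted-sum optimality finishes the proof uniformly, with no case split. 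What each buys: the paper's route is more elementary and needs neither convexity nor attainment of per-group minima; yours yields a sharper characterization of the minimax point and, notably, covers a regime the paper's algebra silently excludes. In the paper's Case 2, the step replacing $(1-2\beta)L^0(F^\star, D^\star)$ by $(1-2\beta)L^1(F^\star, D^\star)$ is valid only when $\beta \leq 1/2$, and the three facts above alone do not imply the theorem when $\beta > 1/2$: a two-point achievable set with per-group loss pairs $(0.5, 0)$ and $(0.2, 0.6)$ and $\beta = 0.99$ satisfies all three facts (the first point is minimax-optimal, the second jointly optimal) yet has minimax disparity $0.5 > 0.4$. So some convexity-type assumption of the kind your exchange argument invokes appears genuinely necessary there, and your instinct that the exchange lemma is the real content is well-founded.

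One caveat on your crux: you attribute joint convexity of $L^z$ in $(F,D)$ to ``the argument behind Proposition~\ref{prop:convex},'' but that proposition and its proof establish convexity in $F$ and in $D$ \emph{separately}. The coupling term $D^{(m)} F(X)$ inside $\sigma$ is bilinear in the pair, so $L^z$ is biconvex rather than jointly convex in general. Your segment argument moves along a line in the joint parameter space and needs convexity of both $L^0$ and $L^1$ along that line (plus finiteness of $L^1(F', D')$ at the comparison point), so joint convexity --- or, what suffices, convexity of the achievable set of loss pairs $(L^0, L^1)$ --- must be added as an explicit hypothesis; it does hold, for instance, when the classifier is fixed and only the deferrer is trained. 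With that hypothesis stated, your proof is complete, and in the $\beta > 1/2$ regime it is actually more robust than the paper's own argument.
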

%
%The above theorem states that 
%the minimax-fair solution $F^\star, D^\star$ has a lower disparity between the risks for group $Z=0$ and $Z=1$ than simple $F^\circ, D^\circ$.
%
The proof is presented in Appendix~\ref{sec:proofs}.
Note that minimax Pareto fairness is a generalization of fairness by balancing error rate across the protected groups, but is also more difficult and costly to achieve.
Furthermore, minimax Pareto fairness can handle non-binary protected attributes as well; we refer the reader to \citet{martinezminimax} for further discussion on the properties of the minimax-fair solution.
For our simulations, we will use the algorithm proposed by \cite{diana2021Convergent} to achieve minimax Pareto fairness.

%The above variants are two possible methods for ensuring fairness of the final prediction that support multi-class protected attributes, and 
Depending on the application, other fairness methods can also be incorporated in the framework.
For example, if the fairness goal is to ensure demographic parity or equalized odds, then fairness constraints \cite{dwork2018fairness,celis2019classification}, regularizers \cite{kamishima2011fairness}, or post-processing methods \cite{hardt2016equality, pleiss2017fairness} can alternately be employed.

\textbf{Sparse Committee Selection. } 
%A drawback of 
The joint framework could assign non-zero weight to all experts.
%, thus requiring predictions from all of the experts to compute the final output.
%
In a real-world application, requiring predictions from all of the experts can be extremely costly.
%or even often impossible. 
%
% ML 10/4/20 space cut - unnecessary
%since the total number of experts as well as the number of inputs that need to be judged can be quite large, and obtaining any one prediction requires access to additional information which can have an additional cost associated with it.
%
To address this, we propose a sparse variant to choose a limited number of experts per input.

%\textit{Random sampling. }
%One way to limit n
The number of experts consulted for any given input can be limited by using the weights from $D(X)$ to construct a small committee.
Suppose we are given that the committee size can be at most $k$. Then, for any input $X$, we construct a probability distribution over the experts with probability assigned to each expert being proportional to its weight in $D(X)$,
%(ignoring the weight assigned to the classifier) 
and sample $k$ experts i.i.d. from this distribution.
The final output can be obtained by replacing $D^\top Y_{E}$ in $\hat{Y}_D$ by the mean prediction of the committee formed by this subset (scaled by the sum of weights in $D$).
%\footnote{This is also equal to the committee's scaled majority decision.}.
%
%simply the majority (or mean when probabilistic) decision of the committee formed by this subset (scaled by the sum of weights).
%
We refer to this framework as the \textit{joint sparse framework}, when using the simple log-loss objective function, or \textit{joint balanced/minimax-fair sparse framework}, when using either balanced or minimax-fair log-loss objective function. 
We can show that the expected error disparity between joint normal and joint sparse solutions indeed depends on the properties of the distribution induced by $D(X)$.
\begin{theorem}[Price of sparsity] \label{thm:sparsity}
Suppose $\mathcal{Y} = \set{0,1}$ and let $D$ denote the deferrer output and $\hat{Y}_D$ denote the prediction of the joint framework for a given input. 
Given $k \in [m]$, let random variable $\tilde{Y}_{D,k}$ denote the prediction of the joint \textbf{sparse} framework for this input.
%, i.e., the aggregated prediction of the $k$ experts sampled from the distribution induced by $D$ for the given input. 
%
The expected difference of loss across the two predictions can be bounded as follows:
\[\textstyle \E \abs{\log\hat{Y}_D{-}\log \tilde{Y}_{D,k}} < s_D \norm{D}_1  + \max\left(2\norm{D}_1, 1\right),\]
where $s_D$ denotes the mean absolute deviation \cite{gorard2005revisiting} of the distribution induced by $D$.
\end{theorem}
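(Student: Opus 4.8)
The plan is to reduce the statement to an estimate for a scalar unbiased estimator pushed through the log-sigmoid link. Writing $a := D^\top Y_{E,1}$, the dense prediction is $\hat{Y}_D = \sigma(a)$ and the sparse prediction is $\tilde{Y}_{D,k} = \sigma(b)$, where $b := \norm{D}_1\,\mu_k$ and $\mu_k := \tfrac1k\sum_{j=1}^k Y_{E,1}^{(i_j)}$ is the empirical label-$1$ frequency of a committee $i_1,\dots,i_k$ sampled i.i.d.\ from the distribution $p_i = D^{(i)}/\norm{D}_1$. The first observation is that this is exactly the ``distribution induced by $D$'': its mean is $\bar Y := \sum_i p_i\, Y_{E,1}^{(i)} = a/\norm{D}_1$ and its mean absolute deviation is $s_D$. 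In particular $\E[\mu_k] = \bar Y$, so $\E[b] = a$, i.e.\ the sparse logit is an unbiased estimate of the dense logit $a$, and all logarithms stay finite since $\sigma(b) = (1 + e^{1-2b})^{-1} > 0$.

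I would then control the committee mean. Because the $Y_{E,1}^{(i_j)}$ are i.i.d.\ with mean $\bar Y$, the triangle inequality gives $\E\abs{\mu_k - \bar Y} = \E\big|\tfrac1k\sum_j (Y_{E,1}^{(i_j)} - \bar Y)\big| \le \tfrac1k\sum_j \E\abs{Y_{E,1}^{(i_j)} - \bar Y} = s_D$, uniformly in $k$. Hence $\E\abs{a - b} = \norm{D}_1\,\E\abs{\mu_k - \bar Y} \le s_D\,\norm{D}_1$, which will supply the first term of the claimed bound.

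The remaining work is to transfer this control of the logit difference $\abs{a-b}$ to the log-probability difference $\abs{\log\hat{Y}_D - \log\tilde{Y}_{D,k}} = \abs{\log\sigma(a) - \log\sigma(b)}$ across the nonlinear link. I would use the decomposition $\log\sigma(x) = x - h(x)$ with $h(x) := \log(e^x + e^{1-x})$, giving $\abs{\log\sigma(a)-\log\sigma(b)} \le \abs{a-b} + \abs{h(a)-h(b)}$. Taking expectations, the first summand is at most $s_D\norm{D}_1$ by the previous step. For the second, both $a$ and $b$ lie in $[0,\norm{D}_1]$ (each $Y_{E,1}^{(i)}\in\set{0,1}$ and $D^{(i)}\ge 0$), so $\abs{h(a)-h(b)}$ is at most the oscillation of the convex function $h$ over $[0,\norm{D}_1]$; since $h$ is minimized at $x=\tfrac12$, a short case split on whether $\tfrac12 \in [0,\norm{D}_1]$ shows this oscillation is bounded by $\max(2\norm{D}_1,1)$. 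Summing the two contributions yields $\E\abs{\log\hat{Y}_D - \log\tilde{Y}_{D,k}} < s_D\norm{D}_1 + \max(2\norm{D}_1,1)$.

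The \textbf{main obstacle} is the nonlinearity of the link: one cannot move the expectation inside $\log\sigma$, and a crude global Lipschitz bound on $\log\sigma$ (whose derivative lies in $(0,2)$) only yields $2s_D\norm{D}_1$ and loses the clean split into a deviation term and a constant. The decomposition $\log\sigma = \mathrm{id} - h$ is precisely what isolates the exactly-linear part, whose expected deviation is controlled sharply by the MAD $s_D$ through unbiasedness and independence, from the bounded log-sum-exp part, which is controlled deterministically by the range of the interval on which the logits live. The only delicate computations are the uniform-in-$k$ bound $\E\abs{\mu_k - \bar Y} \le s_D$ and the oscillation estimate for $h$; both are elementary once the sparse prediction is recognized as an unbiased estimate of the dense logit.
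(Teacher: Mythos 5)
Your proposal is correct and follows essentially the same route as the paper's proof: the identical decomposition $\log\sigma(x) = x - \log(e^x + e^{1-x})$, the triangle-inequality split into a linear logit term controlled by the mean absolute deviation $s_D$ (via the unbiasedness identity $\E_{r\sim D}[E_r]\,\norm{D}_1 = D^\top Y_{E,1}$) and a bounded log-sum-exp term. The only cosmetic difference is in the second term, where you bound the oscillation of the convex function $h$ over $[0,\norm{D}_1]$ using its minimum at $x=\tfrac12$, while the paper uses the explicit estimate $\log(e^x+e^{1-x}) \leq \log 2 + \max(2x,1)$ together with the same minimum value $\tfrac12+\log 2$; both give the stated bound.
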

%
%In the above theorem, 
$s_D$ characterizes the dispersion of the distribution induced by $D$ and if $D$ has low dispersion, then the expected difference of loss from choosing a committee from distribution induced by $D$ is low.
The proof is presented in Appendix~\ref{sec:proofs}.
%
%\footnote
One could also, alternately, select the experts with the $k$-largest weights for each input  \cite{jung2013crowdsourced}.
%We focus on random sampling here due to ease of stochastic analysis.

%\textbf{Selecting $k$ with largest weight. }

%\textbf{Batch training. }

%\subsubsection{Distributive committee selection.} TODO
%Same as batch training????

\textbf{Dropout. }
Given the possible disparities in the accuracies of the experts at the end of the pipeline, training a joint learning framework with diverse experts can suffer from the generalization pitfalls seen commonly in optimization literature \cite{mohri2018foundations}. 
If one expert is relatively more accurate than other experts the framework can learn to assign a relatively larger weight to this expert for every input compared to other experts.
This is, however, quite undesirable as it 
%defeats the purpose of having a large pool of experts and 
assigns a disproportionate load to just one (or a small subset) of experts.

To tackle this issue, we introduce a random \textit{dropout} procedure during training: an expert's prediction is randomly dropped with a probability of $p$ and the expert's weight is not trained on the input sample for which it is dropped.
This simple procedure helps reduce dependence on any single expert and ensures a relatively balanced load distribution.
%\footnote{We call the technique \textit{dropout} since the design and purpose is similar to the dropout technique is neural networks \cite{srivastava2014dropout} that is used to reduce over-dependence on any small subset of neurons in the network.} (see \S\ref{sec:impl_issues} for further discussion).
%

% Training the joint learning framework requires access to expert decisions so as to enable the deferrer to learn the feature spaces corresponding to each expert.
% %
% However, 

\textbf{Additional Regularization. } As mentioned earlier, the experts can have individual costs associated with their consultation.
Let $C_{E_1, \dots, E_{m-1}}: \mathcal{X} \rightarrow \R^{m-1}$ refer to the vector of input specific cost of each expert consultation.
Assuming that the costs of the experts are independent of one another, we can take these costs into account in our framework by adding $ \lambda \cdot C_{E_1, \dots, E_{m-1}}(X)^\top D(X)_{-1}$ as a regularizer to the loss function, where $D(X)_{-1}$ denotes the first $(m-1)$ elements of the vector $D(X)$ and $\lambda > 0$ is a hyperparameter.

% Let $g : \mathcal{X} \times \mathcal{Z} \rightarrow \{0,1\}^m$ denote the function that decides the committee composition.
% %
% Given input $X$, the function $g$ 
% returns a vector in $\{0,1\}^m$ that represents the chosen committee.
% %
% The final decision of the framework is the following
% \[Y''_g := \frac{1}{k} \cdot [F(X,Z), E_1(X, W, Z), \dots, E_{m-1}(X, W, Z)]^\top g(X, Z).\]
% %
% Let $\mathcal{G}$ denote the hypothesis class of all functions from 
% %
% Using the partition \textit{train-2}, the function $g$ is trained according to the following optimization problem:
% \begin{align*}
%     &\min_g (Y''_g - Y)^2 \\
%     \text{ s.t. } & \norm{g(X, Z, F(X, Z), h_1(X,Z), \dots, h_{m-1}(X,Z))}_0 \leq k
% \end{align*}

\section{Synthetic Simulations} \label{sec:synthetic_experiments}

We first test the efficacy of the joint learning framework and its variants on synthetic settings.
%
%This will help us evaluate the efficacy of various baseline machine-hybrid frameworks for settings with different kinds of experts.
%
We use a synthetic and a real-world dataset for these simulations, and synthetically generate expert predictions for each input sample.
For all datasets, $L_{\text{clf}}$ will be the log-loss function and classifier $F$ will be the standard logistic function.

\subsection{Synthetic Dataset} \label{sec:synthetic_dataset}
%This simulation
%\footnote{The synthetic dataset used here is inspired by the example suggested in \cite{mozannar2020consistent}.
%
%However, they consider a single expert setting and their example consists of just two clusters.
%}
%
%emphasizes the necessity of simultaneously learning the classifier and the deferrer, and shows that our joint learning framework can accomplish this goal.

\begin{figure}[t]
    \centering
    \includegraphics[width=0.5\linewidth]{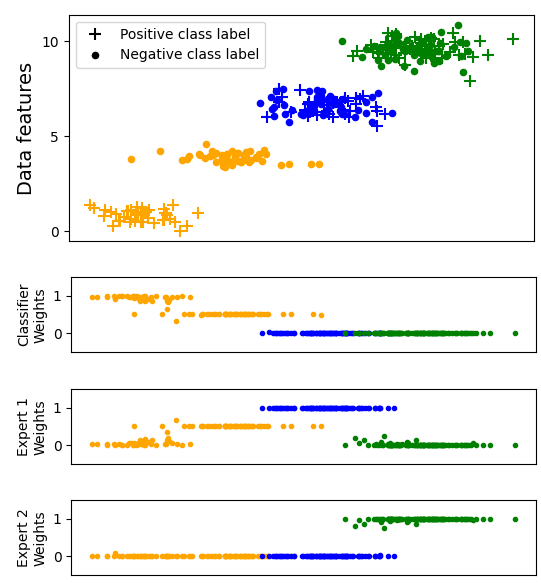}
    \caption{(\S\ref{sec:synthetic_dataset} simulations) The first plot shows the datapoints in the synthetic dataset. The next three plots show the weights assigned to classifier, expert 1 and expert 2 respectively for different clusters by the joint learning framework. 
    %The third and the fourth plot show the weights assigned to the experts 1 and 2 respectively for different clusters.
    }
    \label{fig:syn_data_fig}
\end{figure}

\textbf{Dataset and Experts. }
Each sample in the dataset contains two features, sampled from a two-dimensional normal distribution, and a binary class label (positive or negative).
There are two available experts; their behaviour is described below.

Let $\mu \sim \text{Unif}(0,1)^2$ denote a randomly sampled mean vector and let $\Sigma \in \R^{2 \times 2}$ denote a covariance matrix that is a diagonal matrix with diagonal entries sampled from $\text{Unif}(0,1)$.
The data has 3 clusters, represented by colors \textit{orange}, \textit{blue}, and \textit{green}.
The \textit{orange} cluster has two further sub-clusters: the first sub-cluster is sampled from the distribution $\mathcal{N}(\mu, \Sigma)$ and is assigned class label 1, while the second sub-cluster is sampled from the distribution $\mathcal{N}(\mu + 3, \Sigma)$ and is assigned  label 0.
Since the sub-clusters are well-separated, this \textit{orange} cluster can be accurately classified using the two dimensions.

The \textit{blue} cluster is sampled from the distribution $\mathcal{N}(\mu+6, \Sigma)$, and each sample is assigned a class label 1 with probability 0.5.
Expert 1 is assumed to be accurate over the \textit{blue} cluster, i.e., if a sample belongs to the \textit{blue} cluster, expert 1 returns the correct label for that sample; otherwise it returns a random label.
Similarly, the \textit{green} cluster is sampled from the distribution $\mathcal{N}(\mu+9, \Sigma)$, each sample is assigned a class label 1 with probability 0.5 and Expert 2 is assumed to be accurate over the \textit{green} cluster and random for other clusters.

We construct a dataset with 1000 samples using the above process, with an almost equal proportion of samples in each cluster; the samples are randomly divided into train and test partitions (80-20 split).
The distribution of the data-points is graphically presented in \textbf{Figure~\ref{fig:syn_data_fig}}.
Suppose the hypothesis class of classifiers is limited to linear classifiers.
The ideal solution (in the absence of any expert costs) is for the classifier to accurately classify elements of the \textit{orange} cluster, and defer the samples from \textit{blue} cluster to expert 1 and the samples from \textit{green} cluster to expert 2.
If the linear classifier is learnt before training the deferrer, then it will try to reduce error across all clusters, and resulting framework will not be accurate over any cluster, since clusters \textit{blue} and \textit{green} cannot be linearly separated.
By studying the performance for this synthetic dataset we can determine if the joint learning framework accurately deciphers the underlying data-structure.
%of the data.

\vk{We also report the performance of two crowdsourcing algorithms: (a) \textit{LL} algorithm \cite{li2015cheaper} which tackles the worker selection problem, given the reliability and variance of all the workers, and (b) \textit{CrowdSelect} \cite{qiu2016crowdselect}, which aims to model the behaviour of the workers to appropriately allocate a subset of workers to each task.
For both crowdsourcing algorithms, the classifier is pre-trained using the train partition, and treated as just another worker.
The details of these algorithms are provided in Appendix~\ref{sec:baselines}.
}

\begin{table*}[t]
\small
\begin{tabular}{llccc}
\toprule
\multicolumn{2}{l}{Method} & \specialcell{Overall\\ Accuracy}  & \specialcell{Non-AAE\\ Accuracy}  & \specialcell{AAE\\ Accuracy} \\
\midrule
\multirow{3}{*}{Baselines} & Classifier only & .89 (.00) & .86 (.00) & .96 (.00) \\
& Randomly selected committee & .84 (.07) & .83 (.10) & .85 (.01) \\
& Randomly selected fair committee & .88 (.06) & .86 (.11) & .93 (.03) \\
& LL & .96 (.03) & .97 (.03) & .95 (.04) \\
& CrowdSelect & .91 (.04) & .89 (.06) & .93 (.04) \\
\midrule
\multirow{3}{*}{\specialcell{Joint learning frameworks\\and fair variants} } & Joint framework &  .92 (.02) & .89 (.03) & .97 (.00) \\
& Joint balanced framework & .94 (.01) & .92 (.02) & .98 (.00)\\
& Joint minimax-fair framework & .98 (.01) & .98 (.01) & .97 (.01)\\
\midrule
\multirow{3}{*}{\specialcell{Sparse variants of joint\\learning framework}}  & Joint sparse framework &  .92 (.01) & .90 (.02) & .96 (.01) \\
& Joint balanced and sparse framework & .92 (.01) & .89 (.01) & .97 (.00)\\
& Joint minimax-fair and sparse framework & .98 (.01) & .97 (.01) & .98 (.00)\\
\bottomrule
\end{tabular}
\caption{Overall and dialect-specific mean accuracies (standard error in brackets) for simulations in \S\ref{sec:offensive_synthetic}. 
%The first three methods are the baselines, the next three methods represent the joint learning framework and its fair variants proposed in this paper, and the last three methods represent the sparse variants of the frameworks.
}
\label{tab:main_results}
%\vspace{-2em}
\end{table*}

\textbf{Implementation Details. }
We use projected gradient descent, with 3000 iterations, learning rate $\eta=0.05$, and $\alpha_1 = 0, \alpha_2 = 1$.
%
%The hyperparameters used for this simulation are $\alpha_1 = 0$ and $\alpha_2 = 1$.
%
As discussed before, $\alpha_1 = 0$ can also train the classifier and deferrer simultaneously.

\textbf{Results. }
%As expected, 
A baseline SVM classifier trained over the entire dataset has accuracy around 0.67 (accurate for one cluster and random over the other two).
In comparison, the joint learning framework has perfect (1.0) accuracy.
If the sparse variant of the joint learning framework is used with $k{=}1$ (defer to single expert), the accuracy drops to 0.91.
To better understand the performance of the framework, Figure~\ref{fig:syn_data_fig} presents the weights (normalized) assigned to the different experts (and classifier) for the test partition (bottom three plots).
%
%The assigned weights  for different clusters are 
%presented in the .
%

Starting with the \textit{green} cluster, the lowest plot shows that expert 2 is assigned the highest weight for samples in this cluster, implying that the prediction for this cluster is always correctly deferred to expert 2.
Similarly, the prediction for the \textit{blue} cluster is always correctly deferred to expert 1.
For most of the samples in the \textit{orange} cluster, the weight assigned to the classifier is larger than the weights assigned to the two experts.
For some samples in this cluster, however, a non-trivial weight is also assigned to expert 1, which is why the accuracy for the sparse variant is lower than the accuracy of the non-sparse variant.
%t; i.e., in some cases, the decision of an \textit{orange} sample is incorrectly deferred to expert 1 which makes a random prediction.
%
This can be prevented using non-zero expert costs, which we employ in the next simulation.

\vk{The baseline \textit{LL} algorithm achieves an accuracy of 67\% on this dataset; this is because it associates a single measure of aggregated reliability with each worker, which in this case is unsuitable since each worker has their specific domain of expertise.
The \textit{CrowdSelect} algorithm achieves the best accuracy of around 83\%; in this case, the error models for each expert and the classifier are constructed individually. Due to this, the algorithm is unable to perfectly stratify the input space amongst the experts (and classifier).
%according to their domain of expertise.
%, and hence the task allocation by \textit{CrowdSelect} is not as good as our joint framework.
%
%Our deferrer-based joint framework, on the other hand, inherently learns the error models of the experts, as well as, the best classifier for this setting simultaneously. 
%
%This comparison clearly demonstrates the necessity of training the classifier and deferral amongst all experts at the same time.
}

\textbf{Discussion. }
The purpose of this simulation was to show that the deferrer can choose experts in an input-specific manner.
The results show that the deferrer can indeed decipher the underlying structure of the dataset, and accordingly choose the expert(s) to defer to for each input (addressing the drawback of \textit{LL}).
The important aspect of the problem to notice here is that the cluster identity is the additional information available only to the experts.
The cluster identity is crucial for the experts as it reflects their domain of expertise and helps them make the correct prediction if the sample lies in their domain.
On the other hand, the cluster identity is useful to the deferrer only to defer correctly; even if the cluster is part of the input, the framework cannot use it to make a correct prediction, but can use it to defer to the correct expert.
In other words, the framework can use the available information to identify samples that need to be deferred to an expert (addressing the drawback of \textit{CrowdSelect}).
This sub-problem of directly identifying contentious input samples is also related to prior work by \citet{raghu2019direct}.

\subsection{Offensive Language Dataset} \label{sec:offensive_synthetic}
% ML 10/5/20 space cut
%This simulation uses a real-world dataset with synthetic experts.

\textbf{Dataset.} Our base dataset consists of around 25k Twitter posts curated by \citet{davidson2017automated}; all posts are annotated with a label that corresponds to whether they contain hate speech, offensive language, or neither. We set class label to 1 if the post contains hate speech or offensive language, and 0 otherwise.
Using the dialect identification model of \citet{blodgett2017dataset}, we also label the dialect of the posts: African-American English (AAE) or not.
Around 36\% of the posts in the dataset labeled as AAE.
%% [KK] For the final version, it will be helpful to state the proportion of posts labeled as AAE, non-AAE, and neither (likely pruned).
%VK: Have added fraction of AAE posts (its only binary since its AAE vs non-AAE).
%
We treat dialect as the protected attribute in this case. %
%The dialect label will also be available only to the experts and will serve as the additional information in this case.

% \textbf{Design.} 
% %There are two major components to the design of this synthetic experiment: the automated classifier and the synthetic experts. 
% %
% The dataset is split into train and test partitions (80-20 split).
% %
% %
% For both classifier and deferrer, we use a simple two-layer neural network.
% %
% The networks takes as input a 100-dimensional feature vector corresponding to a given Twitter post (obtained using pre-trained GloVe embeddings \cite{pennington2014glove}).
%and predicts whether the post contains any kind of offensive language or neither.
%
%
%
% Check this
%It can be seen the more accurate experts are assigned larger weights.
%
%Note that input to the classifier does not contain the inferred dialect label of the post, and the dialect label is considered to be part of the additional information provided to the experts.

\textbf{Experts. } The experts are constructed to be biased against one of the dialects.
%
%The number of experts ranges from 1 to 20 for these simulations.
%
We generate $m$ synthetic experts, with $\lfloor 3m/4 \rfloor $ experts biased against AAE dialect and $\lceil m/4 \rceil$ experts biased against non-AAE dialect. 
To simulate the first $\lfloor 3m/4 \rfloor$ experts, for each expert $i \in \set{1, \dots, \lfloor 3m/4\rfloor}$, we sample two quantities: $p_i{\sim}\text{Unif}(0.6, 1)$ and $q_i{\sim}\text{Unif}(0.6, p_i)$. For expert $i$, $p_i$ will be the its accuracy for the non-AAE group and $q_i$ will its accuracy for the AAE group.
To make a decision, if the input belongs to the non-AAE group then this expert outputs the correct label with probability $p_i$ and if the input belongs to the AAE group then this expert outputs the correct label with probability $q_i$.
%
%With this formulation, 
By design, the first $\lfloor 3m/4 \rfloor $ experts can have a certain level of bias against the AAE group since $q_i < p_i$ for all $i \in \set{1, \dots, \lfloor 3m/4\rfloor}$.
%, and depending on the value of $p_i$ sampled, the accuracy of the expert can be better or worse than the classifier.
%
The same process, with flipped $p_i$ and $q_i$, is repeated for the remaining $\lceil m/4 \rceil$ experts, so that they are biased against the non-AAE group.

% %, so as to force the deferrer to primarily depend on the classifier.
% \begin{figure}[t]
%     \centering
%     \includegraphics[width=\linewidth]{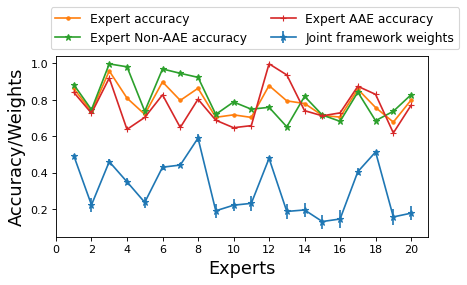}
%     \caption{Weights assigned by the joint learning model and the accuracies of the 20 experts (one iteration shown). Accuracies and weights are seen to follow a similar pattern.}
%     \label{fig:wts_plot}
%     %\vspace{-1em}
% \end{figure}

\textbf{Baselines. }
There are three simple baselines that can be easily implemented: (1) using the classifier only, (2) randomly selected committee - a committee of size $\lceil m/4\rceil$ is randomly selected (in this case, the predictions are expected to be biased against the AAE dialect since most of the experts are biased against the AAE dialect - see \S\ref{sec:other_experiments}), and (3) random fair committee - i.e., if the post is in AAE dialect, the committee randomly selects from experts with higher accuracy for AAE group, and if the post is in non-AAE dialect, the committee randomly selects from experts with higher accuracy for non-AAE group. This committee selection should ensure relatively balanced accuracy across the dialects, and can therefore be used to judge the fairness of the joint learning framework. 
We also implement and report the performance of \textit{LL} and \textit{CrowdSelect} algorithms for this dataset.
%However, in general, the deferral system will not have access to the dialect labels, as it is assumed to be additional information only available to the experts.

\textbf{Implementation Details. }
The dataset is split into train and test partitions (80-20 split).
For both classifier and deferrer, we use a simple two-layer neural network, that takes as input a 100-dimensional vector corresponding to a given Twitter post (obtained using pre-trained GloVe embeddings \cite{pennington2014glove}).
The experts are given a cost of 1 each, i.e., $C_{E_1, \dots, E_{m-1}} = \mathbf{1}$ and  $\lambda = 0.05$ (the regularizer used is $\lambda \cdot \E[C_{E_1, \dots, E_{m-1}}(X)^\top D(X)_{-1}]$). Inspired by prior work on adaptive learning rate \cite{duchi2011adaptive}, exponent $c$ of parameter $\alpha$ is set at 0.5 and dropout rate at 0.2. 
We present the results for $m=20$ in this section and discuss the performance for different $m, \lambda$, and dropout rate in Appendix~\ref{sec:other_experiments}.
We use stochastic gradient descent for training with 
learning rate $\eta=0.1$
%for the joint framework and $\eta=0.2$ for the joint fair framework.
%
and for 100 iterations with batch size of 200 per iteration.
For the sparse variants with $m=20$, we sample $k=5$ experts from the output distribution.
%\textbf{Metrics.}
The process is repeated 100 times, with a new set of experts sampled every time, and we report the mean and standard error of the overall and dialect-specific accuracies.

\textbf{Results.}
The results for the joint learning framework and its variants, along with the baselines are presented in \textbf{Table~\ref{tab:main_results}}.
The joint learning framework has a larger overall and group-specific average accuracy than the classifier.
%and the random committee baseline.
%
%
The best group-specific and overall accuracy is achieved by the joint minimax-fair framework (and its sparse variant), showing that it is indeed desirable to enforce minimax-fairness in this setting as it leads to an overall improved performance across all groups.
%
% Secondly, \textbf{Figure~\ref{fig:wts_plot}} plots the normalized average weight assigned by the joint framework to the experts vs the accuracies of the experts  for a single repetition.
% %
% It can be seen that the accuracy of experts and the weights assigned to them follow a similar pattern.
% %
% At the same time, due to our use of dropout, more accurate experts are not assigned disproportionately high weights either, exhibiting the effectiveness of load balancing using dropout.
%
The sparse variations of all joint frameworks, as expected, still have better performance than the classifier and random-selection baselines, and are quite similar to the non-sparse variants.
%(other than the accuracy of the sparse variant of joint balanced framework is lower than its non-sparse counterpart. 
%
%
Joint fair (balanced and minimax-fair) frameworks also have similar or lower accuracy disparity across the groups than random fair committee baseline.
This shows that the learnt deferrer is also able to differentiate between biased and unbiased experts to an extent.
Due to the non-zero $\lambda$ parameter used, on average, the classifier is assigned around $5\%$ of the deferrer weight per input sample. This implies that, when creating sparse committees with $k=5$, the classifier is consulted for around 25\% of the input samples. This fraction can be further increased by appropriately increasing $\lambda$.
%(at the cost of loss in accuracy).

Further, due to our use of dropout, more accurate experts are not assigned disproportionately high weights, exhibiting the effectiveness of load balancing using dropout.
This is demonstrated in \textbf{Figure~\ref{fig:wts_plot}} in Appendix, which presents variation of the weights assigned by the joint framework to the experts vs the accuracies of the experts for a single repetition.
%
%It can be seen that the accuracy of experts and the weights assigned to them follow a similar pattern.
%
%At the same time, 

\vk{
%We also report the crowdsourcing task allocation baselines \textit{LL} and \textit{CrowdSelect}.
%
The \textit{LL} algorithm is able to achieve very high overall accuracy ($\geq 95\%$ for both groups) for this setting. However, our joint minimax-fair sparse framework has two advantages over \textit{LL} algorithm. First, it achieves relatively better accuracy for both dialect groups. Second, \textit{LL} pre-selects the most accurate experts to whom all the inputs are deferred. This is problematic and inefficient since \textit{LL} only uses $k$ out of $m$ experts; in comparison, our algorithm distributes the input samples amongst all experts to reduce the load on the most accurate experts (see Figure~\ref{fig:wts_plot} in Appendix).
\textit{CrowdSelect}, on the other hand, achieves lower overall and group-specific accuracies than joint minimax-fair frameworks.
}

\section{Simulations Using Real-world Data For The Offensive Language Dataset} \label{sec:mturk}

%\begin{itemize}
%\item Short introduction

The simulations in the previous sections highlighted the effectiveness of the joint learning framework in improving the accuracy and fairness of the final prediction.
In this section, we present the results on a similar real-world dataset of Twitter posts, annotated using Mechanical Turk (MTurk).
%
%We will start with the description of the dataset used and the details of the survey design questions and participants.

%\item Source dataset - Davidson et al
%\begin{itemize}
%\item Talk about how many tweets, group distribution and dialect distribution of tweets

\textbf{Dataset.} We use a dataset of 1471 Twitter posts for the MTurk survey. This is a subset of the larger dataset by \citet{davidson2017automated}.
Importantly, this dataset is jointly balanced across the class categories used in \citet{davidson2017automated} and the two dialect groups (as predicted using \citet{blodgett2017dataset}).
Once again, the labels from \citet{davidson2017automated} are treated as the \textit{gold labels} for this dataset.
%
% ML 10/4/20 space cutting
%\footnote{Further discussion on this assumption is provided later.} 
%

%\item Talk about their labeling process - todo

%\item Why did we not choose other offensive language datasets? - todo

%\end{itemize}

\textbf{MTurk Experiment Design. }
%\begin{itemize}
%\item Describe the survey, options, demographic data collection, dialect availability
%
The MTurk survey presented to each participant started with an optional demographic survey.
%that asked the participant to fill in their gender, race, age, and education level (with an option ``prefer not to answer'' for all questions).
%
This was followed by 50 questions; each question contained a Twitter post from the dataset and asked the participant to choose one of the following options: `Post contains threats or insults to a certain group', `Post contains threats or insults to an individual', `Post contains other kinds of threats or insults, such as to an organization or event',  `Post contains profanity', `Post does not contain threats, insults, or profanity'. 
The options presented to the user are along the lines of the taxonomy of offensive speech suggested by \citet{zampieri2019predicting}.
The first four options correspond to offensive language in the Twitter post, while the last option corresponds to the post being non-offensive.
%
%Importantly, in a similar manner 
As in the synthetic simulations, the participants are also provided the predicted dialect label of the post.
The participants were paid a sum of \$4 for completing the survey (at an hourly rate of \$16).
%; once again, the dialect label is treated as an additional information that is available to the experts but not to the classifier in the pipeline.

%As mentioned earlier, each survey contained 50 questions, 
%To construct a real-world dataset where human-in-the-loop settings with multiple experts can be tested effectively, it is important to ensure that every post is labelled by a relatively large number of experts and, at the same time, the pool of experts is relatively small.
%

\begin{table}
\small
\centering
\begin{tabular}{lccc}
\toprule
Method & \specialcell{Overall\\Accuracy}  & \specialcell{Non-AAE\\Accuracy}  & \specialcell{AAE\\Accuracy} \\
\midrule
Classifier only & .78 (.02) & .76 (.05) & .80 (.04) \\
Joint framework &  .85 (.03) & .87 (.04) & .83 (.03) \\
Joint balanced framework & .84 (.03) & .87 (.03) & .81 (.04)\\
Joint minimax framework & .85 (.02) & .87 (.02) & .83 (.02) \\
\bottomrule
\end{tabular}
\caption{Results of the joint learning framework and fair variants on the MTurk dataset.}
\label{tab:results_joint_mturk}
%\vspace{-2.5em} % ML 10/2/20
\end{table}

%\end{itemize}
\textbf{MTurk Experiment Results. }
%\begin{itemize}
%\item Total number of experts
%
Overall, 170 MTurk workers participated in the survey and each post in the dataset was labeled by around 10 different annotators.
Since each participant only labels a fraction of the dataset, we will treat this setting as one where there are missing expert predictions during the training of the joint learning framework.
%
%\item Start with interrater agreement - Describe why low agreement not a problem
The inter-rater agreement, as measured using Krippendorff's $\alpha$ measure, is 0.27.
As per heuristic interpretation \cite{gwetkrippendorff}, this level of interrater agreement is considered quite low for a standard dataset annotation task.
However, it is suitable for our purpose since our framework aims to address situations where there is considerable disparity in the performances of different humans in the pipeline, and the goal of the joint learning framework is to choose the annotators that are expected to be accurate for the given input.

%\item Overall aggregated accuracy

The overall accuracy of the aggregated responses (i.e., taking a majority of all responses for every post and comparing to the \textit{gold label}) is around 87\%, which is close to the accuracy of the automated classifier in \S\ref{sec:offensive_synthetic} (84\% for AAE posts and 91\% for non-AAE posts).
%
%The accuracy of the aggregated responses for posts in AAE dialect is around 84\%, while accuracy of the aggregated responses for posts in non-AAE dialect is around 91\%.
%
The high accuracy shows that using crowdsourced annotations in this setting is quite effective and the hypothetical \textit{aggregated crowd annotator} can indeed be considered an \textit{expert} for this content moderation task.
However, the individual accuracies of the experts is arguably more interesting and relevant to our setting.

%\item Avg accuracy of crowdworkers, range of accuracies, lowest and highes accuracy

The average individual accuracy of a participant is 77\% ($\pm 13\%$). The minimum individual accuracy is $\approx$ 38\% while the maximum individual accuracy is 98\%.
The wide range of accuracies evidences large variation in annotator expertise for this task.
The individual accuracies for posts from different dialects also presents a similar picture.
The average individual accuracy of a participant for the AAE dialect posts is 76\% ($\pm 15\%$) and average individual accuracy of a participant for the non-AAE dialect posts is 78\% ($\pm 14\%$).

%\item How many experts biased against AAE and how many against non-AAE

While mean individual accuracies for the two dialects are quite similar, most annotators do display a disparity in their accuracy across the two groups.
92 of the 170 participants had a higher accuracy when labeling posts written in a non-AAE dialect. The average difference between the accuracy for non-AAE dialect posts and AAE dialect posts for this group of participants was 8.5\% ($\pm 6.6\%$). 
75 participants had a higher accuracy when labeling posts written in the AAE dialect. The average difference between the accuracy for AAE dialect posts and non-AAE dialect posts was 7.1\% ($\pm 5.5\%$). 
Three remaining participants were equally accurate for both groups.
The disparate accuracies here are quite similar to those in the early %synthetic expert design we considered in the 
synthetic simulations.
We next analyze the performance of joint learning framework on this dataset.

%\end{itemize}
\textbf{Joint Learning Framework Results on MTurk Dataset. }
We perform five-fold cross validation on the collected dataset. For each fold, we train our joint learning framework (with $\eta=0.3$) on the train split and evaluate it on the test split.
%
%We set learning rate $\eta=0.3$.
%
Since expert decisions are available only for a subset of the dataset, we do not use dropout or expert costs.
Results are shown in \textbf{Table~\ref{tab:results_joint_mturk}}. 
As before, the overall accuracy of the joint learning frameworks is higher than the accuracy of the classifier alone.
Amongst the fair variants, even though the accuracy for both dialect groups is larger when using the balanced or minimax loss function (compared to the classifier alone), it does not lead to significantly different group-specific accuracies vs.\ simple joint learning framework.
The performance of sparse variants is presented in Appendix~\ref{sec:mturk_other}.
Since a relatively small number of prior predictions is available for each expert, the task of differentiating between experts here is tougher. Hence, sparse variants perform similar or better than the classifier when committee size $k$ is around 60 or greater.

\textbf{Discussion. }
%Collecting a real dataset for an important human-in-the-loop task (content moderation), and analyzing our framework on it, highlights several challenges for such human-AI hybrid computing in the presence of multiple experts.
%
%We discuss some of the salient points here.
%
%\begin{itemize}
%\item Discuss difficulty of labelling for different dialects and disparate accuracies for different dialects
%
The wide range in accuracy observed across annotators confirms the expectation that different humans-in-the-loop will naturally bring varying levels and domains of expertise.
Their accuracy will be affected by not only the training they receive, but also by their background. For example, native speakers of a given dialect are naturally expected to be better annotators for language examples from that dialect.
However, despite the difficulty of the task and the disparity in group accuracies, our joint learning framework is still able to identify the combination of experts that are suitable for any given input and, correspondingly, increase the accuracy and fairness of the final prediction.

%\item Discuss similarity with synthetic models
%\item Discuss correlation of demographic attributes with accuracies

%\item Discuss comparison between our crowdworkers and Davidson et al experts
%\end{itemize}
%\item Discussion on crowdworkers and the fact that MTurk workers can be different than domain experts, i.e., ones that are trained.

% Finally, it is important to note that the experiment was performed using crowd annotators from Mechanical Turk.
% %
% The results in this section present the performance of our algorithms in a setting where the humans in the pipeline have very little training on the given task.
% %
% While the accuracy of the some crowd annotators was quite high, the performance of domain experts (people with more experience in screening offensive language) will most likely be better than the accuracy of an average crowd annotator.
% %
% Hence, the performance with better trained content moderation experts can be expected to equally well, or even better, compared to the performance when using crowd annotators and the baselines.

%\end{itemize}

%\newpage

\section{Discussion, Limitations, and Future Work} \label{sec:limitations}
Our proposed framework addresses settings that involve active human-machine collaboration. 
Having shown its efficacy for synthetic and real-world datasets, 
we next highlight certain limitations and fruitful directions for future work.
%there are various limitations and many interesting directions for future work.
%
%aspects of the framework that can be further addressed and improved as part of potential future work in this domain.

\textbf{Fairness of the Framework. }
It is crucial that the framework is fair with respect to the protected attribute. We proposed two methods for ensuring that the predictions are unbiased: by trying to achieve a balanced error rate for all groups, or by trying to minimize the maximum group-specific error rate (minimax Pareto fairness).
Both fairness mechanisms can handle multi-class protected attributes, which helps generalize our framework to settings beyond simple binary protected attributes (e.g., multiple racial categories).
%
%Minimax Pareto fairness further aims to reduce the worst case of group-specific error, which leads to an overall more accurate model (as seen in the empirical results).
%
%
An additional advantage of using these fairness definitions is that the protected group labels are not required for test or future samples, addressing the issue of their possible unavailability due to policy or privacy restrictions \cite{edwards2017slave}.

As mentioned in \S\ref{sec:variants}, other fairness mechanisms can also be incorporated into our framework. 
%
%By using pre-processing techniques to learn an unbiased data distribution \cite{celisdata, calmon2017optimized}, constructing regularizers for disparate impact of the model \cite{kamishima2011fairness}, or using post-processing methods \cite{hardt2016equality, pleiss2017fairness}, one can ensure that the final predictions of our framework satisfy other desired fairness constraints as well.
%
%
For most applications, the choice of fairness mechanism and constraint is often a context-dependent question.
An uninformed choice of these variables can possibly lead to a degradation of both accuracy and fairness \cite{liu2018delayed} and, therefore, it is important to take the impact of any fairness constraint on the user population into account before its implementation.
Similarly, in our setting, it is important to first decide whether the goal of fairness is minimizing the worst group error, demographic parity, etc., and then choose the mechanism to implement it.
%
%Recent work has delved into the dynamics of fairness metrics \cite{wen2019fairness, creager2020causal}, and future work on machine-human classification frameworks could similarly investigate the dynamics of fairness in this setting as well.

\textbf{Real-world Benchmark Dataset. }
We created an MTurk dataset for evaluating human-in-the-loop prediction frameworks with multiple experts for detecting hate speech.
The goal of constructing this dataset
%where a large number of annotations are available for each input sample 
was to facilitate the learning and evaluation of hybrid frameworks, since having a large number of annotations for each input better enables a learning procedure to differentiate between annotators with different abilities.
Existing datasets have often released only aggregate labels, such as by majority voting, which supports ML model training but 
%
%usually collect annotations so to label every input sample with the majority annotation.
%
%While such an annotation procedure is usually sufficiently accurate for a dataset labelling process, it 
%
does not allow modeling individual annotators.
%
%Since the primary goal of our framework is to infer and address the accuracies and biases of these annotators, we designed our survey and data collection process to ensure that each input sample is annotated by a large number of experts. 
To be able to release such data, we have replaced annotator platform IDs with automatically generated pseudonyms.

Our new dataset has important limitations.
First, in order to obtain a large number of annotations for each Twitter post, we kept the dataset size relatively small.
Furthermore, since the dataset is a subset of the dataset constructed by \citet{davidson2017automated}, it cannot be considered representative of the larger population of Twitter posts/users and the performance demonstrated in our simulations may not translate to larger Twitter datasets.
The number of human annotators (170) in our survey is also larger than desired, even though each annotator labels 50-100 posts.
Our framework aims to learn the domain of expertise of the human experts using only the prior decisions of the experts.
However, it is not completely clear how many prior decisions are needed to accurately determine the domain of expertise of every annotator.
%
%Hence, it might be necessary to get more predictions for each expert to bridge 
The gap between the performance using synthetic experts (\S\ref{sec:synthetic_experiments}) and real-world experts (\S\ref{sec:mturk}) partially shows that it might be necessary to get more predictions for each expert.

\citet{forough2020}, in a position paper on human-in-the-loop frameworks in facial recognition, argue the necessity of real-world empirical studies of such frameworks to justify their widespread use. They also list the technical challenges associated with such empirical studies.
The real-world dataset we provide attempts to initiate a real-world empirical study of human-in-the-loop frameworks for content moderation but, at the same time, faces similar challenges as highlighted by \citeauthor{forough2020}, i.e., issues with data availability and generalizability of participants/context.

%
%The synthetic simulations do aim to provide some insight into this question as well. Using dropout shows that, with the appropriate architecture, the framework can adapt to the setting where expert predictions are not available for a large fraction of training samples.
%
%At the same time, it is important to also that the synthetic experts considered for these simulations are rather simplistic and the performance of dropout will be different for different synthetic expert models.
%
%
%Crucially, this question can be better addressed if a larger and more robust dataset is available for the purpose of evaluation of such human-in-the-loop frameworks. 

%\vspace{-0.5em}
\textbf{MTurk Experiment Generalizability.}
Similar to any other study done using MTurk participants, questions can be raised about the generalizability of the results to a larger population.
%
%The results in this section present the performance of our algorithms in a setting where the humans in the pipeline have very little training on the given task.
%
%While the accuracy of the some crowd annotators was quite high, 
%
While MTurk participants do seem suitable for detecting offensive language in Twitter posts (as seen from the performance of the \textit{aggregated crowdworker} in \S\ref{sec:mturk}), they may not accurately represent how a lay person would respond to a similar survey or how a domain expert would judge the same posts.
The performance of domain experts (people with more experience in screening offensive language) will most likely be better than the accuracy of an average crowd annotator. Correspondingly, our framework with better trained content moderation experts can be expected to have similar or better performance.
%, compared to the performance when using crowd annotators and the baselines.
%
Nevertheless, as pointed out in prior work \cite{forough2020, alba2017multi}, experimental design and choice of participants will play a much bigger role in simulating human-in-the-loop frameworks in settings where human experts cannot be imitated by volunteers.

\textbf{Replaceable Experts.}
%The primary motivation for learning a deferral framework with multiple experts is to better model a real-world setting for human-in-the-loop frameworks.
%
%By taking multiple experts into account, we address an important drawback of prior work in this field that usually limited design and analysis of such frameworks to a single expert.
%
%However, 
%There are other extensions of our model that can be considered to better simulate a real-world implementation of a human-in-the-loop framework; for instance, addition or removal of experts.
An extension of our model that can be further explored is addition/removal of experts.
%
%
%For example, experts can be added or removed from the pipeline.
%
If a new expert is added to the pipeline and the domain of expertise of this expert is different than the domain of the replaced/existing experts, then the framework 
%cannot necessarily adjust to this new expert, and 
might need to be retrained to appropriately include the new expert.
%using the sample decisions of the new expert.
%
This overhead of retraining can, however, be avoided.
%by a framework design that takes into account the possibility of addition/removal of experts.
%
%One way to adapt our framework to handle this setting is to 
For instance, one could train the framework using a \textit{basis of experts}, i.e., divide the feature space into interpretable sub-domains and map the experts to these sub-domains.
Then if we train the framework using sample decisions of experts with disjoint sub-domains of expertise, we can ensure that the entire feature space is covered either by the classifier or the deferrer (in a similar manner as \S\ref{sec:synthetic_dataset}), and any new expert could be mapped to the corresponding sub-domain.
%
%Algorithms for constructing these sub-domains is, however, an active research problem in itself, and 
Approaches from prior work \cite{strobl2009introduction,lopez2019ensemble} can be potentially used to learn these sub-domains and extend our joint learning framework for such settings.

%\subsubsection{Replaceable experts}

\textbf{Improved Implementation.} \label{sec:impl_issues}
Like other complex frameworks involving many decision making components, our framework can also suffer from issues that arise from real-world implementations. % of such a framework.
%
%We address the implementation issues here.
%
%\textit{Load distribution.}
%As mentioned earlier, 
%it is unreasonable to defer to a few relatively-more accurate experts.
%
%While 
For instance, dropout reduces overdependence on any particular expert, but does not consider the load on any small subset of experts.
Alternate load distribution techniques (e.g., \citet{nguyen2015combining}) can be explored further, at the risk of inducing larger committee sizes. 
%
%Furthermore, it is also necessary to construct sparse variants that can work with limited information (such as the MTurk dataset).
%at the cost of possibly large committee sizes for some input samples).
%
%\textit{Dynamic committee size.}
%Another drawback of our framework is that the sparse committee selection procedures only work for fixed committee sizes.
%
Another extension that can be pursued is 
%ways 
to keep the committee size small but variable; this can help with load distribution as well as better committee selection.
%and future work along this direction can help improve the robustness of our framework.
%
%Furthermore, alternate committee selection procedures can also be investigated to improve the performance with small-sized committees even when only few prior predictions are available for each expert (e.g., in the case of MTurk dataset in \S\ref{sec:mturk}).

\section{Conclusion}
We proposed a joint learning model to simultaneously train a classifier and a deferrer in the multiple-experts setting. 
%
%Theoretical and empirical results show that our framework, and its variants, are able to choose input-specific experts to improve the accuracy and fairness of the pipeline.
%that are likely to make accurate and unbiased decisions for any given input.
%
%We suggest variants of our framework that take into account the constraints that are desired in real-world applications of human-in-the-loop settings.
%
%This includes ensuring that the predictions are unbiased, the size of committee chosen can be controlled, and individual costs associated with different experts can be taken into account.
%
{\bf The code and dataset are available online}\footnote{\url{https://github.com/vijaykeswani/Deferral-To-Multiple-Experts}}.
Our framework can help increase the applicability of automated models in settings where human experts are an indispensable part of the pipeline. 
At the same time, by addressing the domains and biases of the model and the humans, we ensure that its utilization is thoughtful and context-aware.
%Future work on the model, as discussed in \S\ref{sec:limitations}, can be pursued to provide extended theoretical insight into the performance of the model and additions to efficiently train models that are more robust to the problems encountered in real-world settings.
%
% On the experimental side, we are currently  conducting an MTurk experiment to collect data regarding the above content moderation tasks with actual human experts.
% %
% This experiment will provide us with a real-world dataset to train and evaluate our approach on, as well as, give us a better insight into correlation between group-specific accuracies and expert demographics.

\section*{Acknowledgments}

We thank the many talented Amazon Mechanical Turk workers who contributed to our study and made this work possible. We also thank our reviewers for their valuable feedback. The statements made herein are solely the opinions of the authors.

\bibliographystyle{plainnat}
\bibliography{references}

\begin{thebibliography}{79}
\providecommand{\natexlab}[1]{#1}
\providecommand{\url}[1]{\texttt{#1}}
\expandafter\ifx\csname urlstyle\endcsname\relax
  \providecommand{\doi}[1]{doi: #1}\else
  \providecommand{\doi}{doi: \begingroup \urlstyle{rm}\Url}\fi

\bibitem[Alba et~al.(2017)Alba, Coden, Gentile, Gruhl, Ristoski, and
  Welch]{alba2017multi}
Alfredo Alba, Anni Coden, Anna~Lisa Gentile, Daniel Gruhl, Petar Ristoski, and
  Steve Welch.
\newblock Multi-lingual concept extraction with linked data and
  human-in-the-loop.
\newblock In \emph{Proceedings of the Knowledge Capture Conference}, pages
  1--8, 2017.

\bibitem[Alberdi et~al.(2009)Alberdi, Strigini, Povyakalo, and
  Ayton]{alberdi2009people}
Eugenio Alberdi, Lorenzo Strigini, Andrey~A Povyakalo, and Peter Ayton.
\newblock Why are people’s decisions sometimes worse with computer support?
\newblock In \emph{International Conference on Computer Safety, Reliability,
  and Security}, pages 18--31. Springer, 2009.

\bibitem[Bansal et~al.(2020)Bansal, Nushi, Kamar, Horvitz, and
  Weld]{bansal2020optimizing}
Gagan Bansal, Besmira Nushi, Ece Kamar, Eric Horvitz, and Daniel~S Weld.
\newblock Optimizing ai for teamwork.
\newblock \emph{arXiv preprint arXiv:2004.13102}, 2020.

\bibitem[Barocas et~al.(2019)Barocas, Hardt, and
  Narayanan]{barocas_fairness_2019}
Solon Barocas, Moritz Hardt, and Arvind Narayanan.
\newblock \emph{Fairness and {Machine} {Learning}}.
\newblock fairmlbook.org, 2019.
\newblock URL \url{http://www.fairmlbook.org}.

\bibitem[Blodgett et~al.(2017)Blodgett, Wei, and
  O’Connor]{blodgett2017dataset}
Su~Lin Blodgett, Johnny Wei, and Brendan O’Connor.
\newblock A dataset and classifier for recognizing social media english.
\newblock In \emph{Proceedings of the 3rd Workshop on Noisy User-generated
  Text}, pages 56--61, 2017.

\bibitem[Bower et~al.(2017)Bower, Kitchen, Niss, Strauss, Vargas, and
  Venkatasubramanian]{bower2017fair}
Amanda Bower, Sarah~N Kitchen, Laura Niss, Martin~J Strauss, Alexander Vargas,
  and Suresh Venkatasubramanian.
\newblock Fair pipelines.
\newblock \emph{arXiv preprint arXiv:1707.00391}, 2017.

\bibitem[Boyd et~al.(2004)Boyd, Boyd, and Vandenberghe]{boyd2004convex}
Stephen Boyd, Stephen~P Boyd, and Lieven Vandenberghe.
\newblock \emph{Convex Optimization}.
\newblock Cambridge University Press, 2004.

\bibitem[Buolamwini and Gebru(2018)]{buolamwini2018gender}
Joy Buolamwini and Timnit Gebru.
\newblock Gender shades: Intersectional accuracy disparities in commercial
  gender classification.
\newblock In \emph{Conference on Fairness, Accountability and Transparency},
  pages 77--91, 2018.

\bibitem[Celis et~al.(2019)Celis, Huang, Keswani, and
  Vishnoi]{celis2019classification}
L~Elisa Celis, Lingxiao Huang, Vijay Keswani, and Nisheeth~K Vishnoi.
\newblock Classification with fairness constraints: A meta-algorithm with
  provable guarantees.
\newblock In \emph{Proceedings of the Conference on Fairness, Accountability,
  and Transparency}, pages 319--328, 2019.

\bibitem[Chen et~al.(2020)Chen, Zaharia, and Zou]{chen2020frugalml}
Lingjiao Chen, Matei Zaharia, and James Zou.
\newblock {FrugalML}: How to use {ML} prediction {APIs} more accurately and
  cheaply.
\newblock In \emph{NeurIPS}, 2020.

\bibitem[Chouldechova(2017)]{chouldechova2017fair}
Alexandra Chouldechova.
\newblock Fair prediction with disparate impact: A study of bias in recidivism
  prediction instruments.
\newblock \emph{Big data}, 5\penalty0 (2):\penalty0 153--163, 2017.

\bibitem[Chouldechova et~al.(2018)Chouldechova, Benavides-Prado, Fialko, and
  Vaithianathan]{chouldechova2018case}
Alexandra Chouldechova, Diana Benavides-Prado, Oleksandr Fialko, and Rhema
  Vaithianathan.
\newblock A case study of algorithm-assisted decision making in child
  maltreatment hotline screening decisions.
\newblock In \emph{Conference on Fairness, Accountability and Transparency},
  pages 134--148, 2018.

\bibitem[Cortes et~al.(2016{\natexlab{a}})Cortes, DeSalvo, and
  Mohri]{cortes2016boosting}
Corinna Cortes, Giulia DeSalvo, and Mehryar Mohri.
\newblock Boosting with abstention.
\newblock \emph{Advances in Neural Information Processing Systems},
  29:\penalty0 1660--1668, 2016{\natexlab{a}}.

\bibitem[Cortes et~al.(2016{\natexlab{b}})Cortes, DeSalvo, and
  Mohri]{cortes2016learning}
Corinna Cortes, Giulia DeSalvo, and Mehryar Mohri.
\newblock Learning with rejection.
\newblock In \emph{International Conference on Algorithmic Learning Theory},
  pages 67--82. Springer, 2016{\natexlab{b}}.

\bibitem[Cortes et~al.(2018)Cortes, DeSalvo, Gentile, Mohri, and
  Yang]{cortes2018online}
Corinna Cortes, Giulia DeSalvo, Claudio Gentile, Mehryar Mohri, and Scott Yang.
\newblock Online learning with abstention.
\newblock In \emph{International conference on machine learning}, pages
  1059--1067. PMLR, 2018.

\bibitem[Cummings(2004)]{cummings2004automation}
Mary Cummings.
\newblock Automation bias in intelligent time critical decision support
  systems.
\newblock In \emph{AIAA 1st Intelligent Systems Technical Conference}, page
  6313, 2004.

\bibitem[Davidson et~al.(2017)Davidson, Warmsley, Macy, and
  Weber]{davidson2017automated}
Thomas Davidson, Dana Warmsley, Michael Macy, and Ingmar Weber.
\newblock Automated hate speech detection and the problem of offensive
  language.
\newblock In \emph{Eleventh international aaai conference on web and social
  media}, 2017.

\bibitem[De et~al.(2020)De, Koley, Ganguly, and
  Gomez-Rodriguez]{de2020regression}
Abir De, Paramita Koley, Niloy Ganguly, and Manuel Gomez-Rodriguez.
\newblock Regression under human assistance.
\newblock In \emph{Proceedings of the AAAI Conference on Artificial
  Intelligence}, volume~34, pages 2611--2620, 2020.

\bibitem[De-Arteaga et~al.(2020)De-Arteaga, Fogliato, and
  Chouldechova]{de2020case}
Maria De-Arteaga, Riccardo Fogliato, and Alexandra Chouldechova.
\newblock A case for humans-in-the-loop: Decisions in the presence of erroneous
  algorithmic scores.
\newblock In \emph{Proceedings of the 2020 CHI Conference on Human Factors in
  Computing Systems}, pages 1--12, 2020.

\bibitem[Duchi et~al.(2011)Duchi, Hazan, and Singer]{duchi2011adaptive}
John Duchi, Elad Hazan, and Yoram Singer.
\newblock Adaptive subgradient methods for online learning and stochastic
  optimization.
\newblock \emph{Journal of machine learning research}, 12\penalty0 (7), 2011.

\bibitem[Duwe and Rocque(2017)]{duwe2017effects}
Grant Duwe and Michael Rocque.
\newblock Effects of automating recidivism risk assessment on reliability,
  predictive validity, and return on investment (roi).
\newblock \emph{Criminology \& Public Policy}, 16\penalty0 (1):\penalty0
  235--269, 2017.

\bibitem[Dwork and Ilvento(2019)]{dwork2018fairness}
Cynthia Dwork and Christina Ilvento.
\newblock Fairness under composition.
\newblock In \emph{Innovations in Theoretical Computer Science Conference
  (ITCS)}, 2019.

\bibitem[Edwards and Veale(2017)]{edwards2017slave}
Lilian Edwards and Michael Veale.
\newblock Slave to the algorithm: Why a right to an explanation is probably not
  the remedy you are looking for.
\newblock \emph{Duke L. \& Tech. Rev.}, 16:\penalty0 18, 2017.

\bibitem[El-Yaniv et~al.(2010)]{el2010foundations}
Ran El-Yaniv et~al.
\newblock On the foundations of noise-free selective classification.
\newblock \emph{Journal of Machine Learning Research}, 11\penalty0 (5), 2010.

\bibitem[Friedler et~al.(2019)Friedler, Scheidegger, Venkatasubramanian,
  Choudhary, Hamilton, and Roth]{friedler2019comparative}
Sorelle~A Friedler, Carlos Scheidegger, Suresh Venkatasubramanian, Sonam
  Choudhary, Evan~P Hamilton, and Derek Roth.
\newblock A comparative study of fairness-enhancing interventions in machine
  learning.
\newblock In \emph{Proceedings of the Conference on Fairness, Accountability,
  and Transparency}, pages 329--338, 2019.

\bibitem[Goddard et~al.(2012)Goddard, Roudsari, and
  Wyatt]{goddard2012automation}
Kate Goddard, Abdul Roudsari, and Jeremy~C Wyatt.
\newblock Automation bias: a systematic review of frequency, effect mediators,
  and mitigators.
\newblock \emph{Journal of the American Medical Informatics Association},
  19\penalty0 (1):\penalty0 121--127, 2012.

\bibitem[Gorard(2005)]{gorard2005revisiting}
Stephen Gorard.
\newblock Revisiting a 90-year-old debate: The advantages of the mean
  deviation.
\newblock \emph{British Journal of Educational Studies}, 53\penalty0
  (4):\penalty0 417--430, 2005.

\bibitem[Green and Chen(2019)]{green2019disparate}
Ben Green and Yiling Chen.
\newblock Disparate interactions: An algorithm-in-the-loop analysis of fairness
  in risk assessments.
\newblock In \emph{Proceedings of the Conference on Fairness, Accountability,
  and Transparency}, pages 90--99, 2019.

\bibitem[Gr{\o}nsund and Aanestad(2020)]{gronsund2020augmenting}
Tor Gr{\o}nsund and Margunn Aanestad.
\newblock Augmenting the algorithm: Emerging human-in-the-loop work
  configurations.
\newblock \emph{The Journal of Strategic Information Systems}, 29\penalty0
  (2):\penalty0 101614, 2020.

\bibitem[Guan et~al.(2018)Guan, Gulshan, Dai, and Hinton]{guan2018said}
Melody Guan, Varun Gulshan, Andrew Dai, and Geoffrey Hinton.
\newblock Who said what: Modeling individual labelers improves classification.
\newblock In \emph{Proceedings of the AAAI Conference on Artificial
  Intelligence}, volume~32, 2018.

\bibitem[Gwet(2011)]{gwetkrippendorff}
Kilem~L Gwet.
\newblock On the krippendorff’s alpha coefficient.
\newblock 2011.
\newblock URL
  \url{https://agreestat.com/papers/onkrippendorffalpha_rev10052015.pdf}.

\bibitem[Halfaker and Geiger(2019)]{halfaker2019ores}
Aaron Halfaker and R~Stuart Geiger.
\newblock Ores: Lowering barriers with participatory machine learning in
  wikipedia.
\newblock \emph{arXiv preprint arXiv:1909.05189}, 2019.

\bibitem[Hardt et~al.(2016)Hardt, Price, and Srebro]{hardt2016equality}
Moritz Hardt, Eric Price, and Nati Srebro.
\newblock Equality of opportunity in supervised learning.
\newblock In \emph{Advances in neural information processing systems}, pages
  3315--3323, 2016.

\bibitem[Hiriart-Urruty and Lemar{\'e}chal(2013)]{hiriart2013convex}
Jean-Baptiste Hiriart-Urruty and Claude Lemar{\'e}chal.
\newblock \emph{Convex Analysis and Minimization Algorithms I: Fundamentals},
  volume 305.
\newblock Springer science \& business media, 2013.

\bibitem[Jung and Lease(2013)]{jung2013crowdsourced}
Hyun~Joon Jung and Matthew Lease.
\newblock Crowdsourced task routing via matrix factorization.
\newblock \emph{arXiv preprint arXiv:1310.5142}, 2013.

\bibitem[Jung et~al.(2014)Jung, Park, and Lease]{jung2014predicting}
Hyun~Joon Jung, Yubin Park, and Matthew Lease.
\newblock Predicting next label quality: A time-series model of crowdwork.
\newblock \emph{HCOMP}, 14:\penalty0 1--9, 2014.

\bibitem[Kamar et~al.(2015)Kamar, Kapoor, and Horvitz]{kamar2015identifying}
Ece Kamar, Ashish Kapoor, and Eric Horvitz.
\newblock Identifying and accounting for task-dependent bias in crowdsourcing.
\newblock In \emph{Proceedings of the AAAI Conference on Human Computation and
  Crowdsourcing}, volume~3, 2015.

\bibitem[Kamiran and Calders(2009)]{kamiran2009classifying}
Faisal Kamiran and Toon Calders.
\newblock Classifying without discriminating.
\newblock In \emph{2009 2nd International Conference on Computer, Control and
  Communication}, pages 1--6. IEEE, 2009.

\bibitem[Kamishima et~al.(2011)Kamishima, Akaho, and
  Sakuma]{kamishima2011fairness}
Toshihiro Kamishima, Shotaro Akaho, and Jun Sakuma.
\newblock Fairness-aware learning through regularization approach.
\newblock In \emph{2011 IEEE 11th International Conference on Data Mining
  Workshops}, pages 643--650. IEEE, 2011.

\bibitem[Katell et~al.(2020)Katell, Young, Dailey, Herman, Guetler, Tam, Bintz,
  Raz, and Krafft]{katell2020toward}
Michael Katell, Meg Young, Dharma Dailey, Bernease Herman, Vivian Guetler,
  Aaron Tam, Corinne Bintz, Daniella Raz, and PM~Krafft.
\newblock Toward situated interventions for algorithmic equity: lessons from
  the field.
\newblock In \emph{Proceedings of the 2020 Conference on Fairness,
  Accountability, and Transparency}, pages 45--55, 2020.

\bibitem[Kay et~al.(2015)Kay, Matuszek, and Munson]{kay2015unequal}
Matthew Kay, Cynthia Matuszek, and Sean~A Munson.
\newblock Unequal representation and gender stereotypes in image search results
  for occupations.
\newblock In \emph{Proceedings of the 33rd Annual ACM Conference on Human
  Factors in Computing Systems}, pages 3819--3828, 2015.

\bibitem[Kieseberg et~al.(2016)Kieseberg, Weippl, and
  Holzinger]{kieseberg2016trust}
Peter Kieseberg, Edgar Weippl, and Andreas Holzinger.
\newblock Trust for the doctor-in-the-loop.
\newblock \emph{ERCIM news}, 104\penalty0 (1):\penalty0 32--33, 2016.

\bibitem[Lease(2011)]{lease2011quality}
Matthew Lease.
\newblock On quality control and machine learning in crowdsourcing.
\newblock \emph{Human Computation}, 11\penalty0 (11), 2011.

\bibitem[Li and Liu(2015)]{li2015cheaper}
Hongwei Li and Qiang Liu.
\newblock Cheaper and better: Selecting good workers for crowdsourcing.
\newblock In \emph{Proceedings of the AAAI Conference on Human Computation and
  Crowdsourcing}, volume~3, 2015.

\bibitem[Li et~al.(2011)Li, Littman, Walsh, and Strehl]{li2011knows}
Lihong Li, Michael~L Littman, Thomas~J Walsh, and Alexander~L Strehl.
\newblock Knows what it knows: a framework for self-aware learning.
\newblock \emph{Machine learning}, 82\penalty0 (3):\penalty0 399--443, 2011.

\bibitem[Liu et~al.(2018)Liu, Dean, Rolf, Simchowitz, and
  Hardt]{liu2018delayed}
Lydia~T Liu, Sarah Dean, Esther Rolf, Max Simchowitz, and Moritz Hardt.
\newblock Delayed impact of fair machine learning.
\newblock In \emph{International Conference on Machine Learning}, pages
  3150--3158, 2018.

\bibitem[Liu et~al.(2019{\natexlab{a}})Liu, Wang, Gong, Lu, and
  Tao]{liu2019deep}
Zimo Liu, Jingya Wang, Shaogang Gong, Huchuan Lu, and Dacheng Tao.
\newblock Deep reinforcement active learning for human-in-the-loop person
  re-identification.
\newblock In \emph{Proceedings of the IEEE International Conference on Computer
  Vision}, pages 6122--6131, 2019{\natexlab{a}}.

\bibitem[Liu et~al.(2019{\natexlab{b}})Liu, Wang, Liang, Salakhutdinov,
  Morency, and Ueda]{liu2019deepgamblers}
Ziyin Liu, Zhikang Wang, Paul~Pu Liang, Russ~R Salakhutdinov, Louis-Philippe
  Morency, and Masahito Ueda.
\newblock Deep gamblers: Learning to abstain with portfolio theory.
\newblock In \emph{Advances in Neural Information Processing Systems}, pages
  10623--10633, 2019{\natexlab{b}}.

\bibitem[Lopez-Garcia et~al.(2019)Lopez-Garcia, Masegosa, Osaba, Onieva, and
  Perallos]{lopez2019ensemble}
Pedro Lopez-Garcia, Antonio~D Masegosa, Eneko Osaba, Enrique Onieva, and Asier
  Perallos.
\newblock Ensemble classification for imbalanced data based on feature space
  partitioning and hybrid metaheuristics.
\newblock \emph{Applied Intelligence}, 49\penalty0 (8):\penalty0 2807--2822,
  2019.

\bibitem[Madras et~al.(2018)Madras, Pitassi, and Zemel]{madras2018predict}
David Madras, Toni Pitassi, and Richard Zemel.
\newblock Predict responsibly: Improving fairness and accuracy by learning to
  defer.
\newblock In \emph{Advances in Neural Information Processing Systems}, pages
  6147--6157, 2018.

\bibitem[Martinez et~al.(2020)Martinez, Bertran, and Sapiro]{martinezminimax}
Natalia Martinez, Martin Bertran, and Guillermo Sapiro.
\newblock Minimax pareto fairness: A multi objective perspective.
\newblock In \emph{International Conference on Machine Learning}, pages
  6755--6764. PMLR, 2020.

\bibitem[Mehrabi et~al.(2019)Mehrabi, Morstatter, Saxena, Lerman, and
  Galstyan]{mehrabi2019survey}
Ninareh Mehrabi, Fred Morstatter, Nripsuta Saxena, Kristina Lerman, and Aram
  Galstyan.
\newblock A survey on bias and fairness in machine learning.
\newblock \emph{arXiv preprint arXiv:1908.09635}, 2019.

\bibitem[Mohri et~al.(2018)Mohri, Rostamizadeh, and
  Talwalkar]{mohri2018foundations}
Mehryar Mohri, Afshin Rostamizadeh, and Ameet Talwalkar.
\newblock \emph{Foundations of Machine Learning}.
\newblock MIT press, 2018.

\bibitem[Mozannar and Sontag(2020)]{mozannar2020consistent}
Hussein Mozannar and David Sontag.
\newblock Consistent estimators for learning to defer to an expert.
\newblock In \emph{International Conference on Machine Learning}, pages
  7076--7087. PMLR, 2020.

\bibitem[Narayanan et~al.(2018)Narayanan, Chen, He, Kim, Gershman, and
  Doshi-Velez]{narayanan2018humans}
Menaka Narayanan, Emily Chen, Jeffrey He, Been Kim, Sam Gershman, and Finale
  Doshi-Velez.
\newblock How do humans understand explanations from machine learning systems?
  an evaluation of the human-interpretability of explanation.
\newblock \emph{arXiv preprint arXiv:1802.00682}, 2018.

\bibitem[Nguyen et~al.(2015)Nguyen, Wallace, and Lease]{nguyen2015combining}
An~Thanh Nguyen, Byron~C Wallace, and Matthew Lease.
\newblock Combining crowd and expert labels using decision theoretic active
  learning.
\newblock In \emph{Third AAAI conference on human computation and
  crowdsourcing}, 2015.

\bibitem[Noble(2018)]{noble2018algorithms}
Safiya~Umoja Noble.
\newblock \emph{Algorithms of oppression: How search engines reinforce racism}.
\newblock NYU Press, 2018.

\bibitem[Nushi et~al.(2015)Nushi, Singla, Gruenheid, Zamanian, Krause, and
  Kossmann]{nushi2015crowd}
Besmira Nushi, Adish Singla, Anja Gruenheid, Erfan Zamanian, Andreas Krause,
  and Donald Kossmann.
\newblock Crowd access path optimization: Diversity matters.
\newblock In \emph{Proceedings of the AAAI Conference on Human Computation and
  Crowdsourcing}, volume~3, 2015.

\bibitem[Olteanu et~al.(2017)Olteanu, Talamadupula, and
  Varshney]{olteanu2017limits}
Alexandra Olteanu, Kartik Talamadupula, and Kush~R Varshney.
\newblock The limits of abstract evaluation metrics: The case of hate speech
  detection.
\newblock In \emph{Proceedings of the 2017 ACM on Web Science Conference},
  pages 405--406, 2017.

\bibitem[Patterson et~al.(2013)Patterson, Horn, Belongie, Perona, and
  Hays]{patterson2013bootstrapping}
Genevieve Patterson, Grant~Van Horn, Serge Belongie, Pietro Perona, and James
  Hays.
\newblock Bootstrapping fine-grained classifiers: Active learning with a crowd
  in the loop.
\newblock In \emph{NeurIPS Workshop on Crowdsourcing: Theory, Algorithms and
  Applications}, 2013.

\bibitem[Pennington et~al.(2014)Pennington, Socher, and
  Manning]{pennington2014glove}
Jeffrey Pennington, Richard Socher, and Christopher~D Manning.
\newblock Glove: Global vectors for word representation.
\newblock In \emph{Proceedings of the 2014 conference on empirical methods in
  natural language processing (EMNLP)}, pages 1532--1543, 2014.

\bibitem[Pleiss et~al.(2017)Pleiss, Raghavan, Wu, Kleinberg, and
  Weinberger]{pleiss2017fairness}
Geoff Pleiss, Manish Raghavan, Felix Wu, Jon Kleinberg, and Kilian~Q
  Weinberger.
\newblock On fairness and calibration.
\newblock In \emph{Advances in Neural Information Processing Systems}, pages
  5680--5689, 2017.

\bibitem[Poursabzi-Sangdeh et~al.(2020)Poursabzi-Sangdeh, Samadi, Vaughan, and
  Wallach]{forough2020}
Forough Poursabzi-Sangdeh, Samira Samadi, Jennifer~Wortman Vaughan, and Hanna
  Wallach.
\newblock A human in the loop is not enough: The need for human-subject
  experiments in facial recognition.
\newblock In \emph{CHI Workshop on Human-Centered Approaches to Fair and
  Responsible AI}, 2020.

\bibitem[Qiu et~al.(2016)Qiu, Squicciarini, Carminati, Caverlee, and
  Khare]{qiu2016crowdselect}
Chenxi Qiu, Anna~C Squicciarini, Barbara Carminati, James Caverlee, and
  Dev~Rishi Khare.
\newblock Crowdselect: Increasing accuracy of crowdsourcing tasks through
  behavior prediction and user selection.
\newblock In \emph{Proceedings of the 25th ACM International on Conference on
  Information and Knowledge Management}, pages 539--548, 2016.

\bibitem[Raghu et~al.(2019{\natexlab{a}})Raghu, Blumer, Corrado, Kleinberg,
  Obermeyer, and Mullainathan]{raghu2019algorithmic}
Maithra Raghu, Katy Blumer, Greg Corrado, Jon Kleinberg, Ziad Obermeyer, and
  Sendhil Mullainathan.
\newblock The algorithmic automation problem: Prediction, triage, and human
  effort.
\newblock \emph{arXiv preprint arXiv:1903.12220}, 2019{\natexlab{a}}.

\bibitem[Raghu et~al.(2019{\natexlab{b}})Raghu, Blumer, Sayres, Obermeyer,
  Kleinberg, Mullainathan, and Kleinberg]{raghu2019direct}
Maithra Raghu, Katy Blumer, Rory Sayres, Ziad Obermeyer, Bobby Kleinberg,
  Sendhil Mullainathan, and Jon Kleinberg.
\newblock Direct uncertainty prediction for medical second opinions.
\newblock In \emph{International Conference on Machine Learning}, pages
  5281--5290, 2019{\natexlab{b}}.

\bibitem[Raji et~al.(2020)Raji, Gebru, Mitchell, Buolamwini, Lee, and
  Denton]{raji2020saving}
Inioluwa~Deborah Raji, Timnit Gebru, Margaret Mitchell, Joy Buolamwini,
  Joonseok Lee, and Emily Denton.
\newblock Saving face: Investigating the ethical concerns of facial recognition
  auditing.
\newblock In \emph{Proceedings of the AAAI/ACM Conference on AI, Ethics, and
  Society}, pages 145--151, 2020.

\bibitem[Sap et~al.(2019)Sap, Card, Gabriel, Choi, and Smith]{sap2019risk}
Maarten Sap, Dallas Card, Saadia Gabriel, Yejin Choi, and Noah~A Smith.
\newblock The risk of racial bias in hate speech detection.
\newblock In \emph{Proceedings of the 57th Annual Meeting of the Association
  for Computational Linguistics}, pages 1668--1678, 2019.

\bibitem[Strickland(2018)]{fake_news}
Eliza Strickland, 2018.
\newblock URL
  \url{https://spectrum.ieee.org/computing/software/aihuman-partnerships-tackle-fake-news}.

\bibitem[Strobl et~al.(2009)Strobl, Malley, and Tutz]{strobl2009introduction}
Carolin Strobl, James Malley, and Gerhard Tutz.
\newblock An introduction to recursive partitioning: rationale, application,
  and characteristics of classification and regression trees, bagging, and
  random forests.
\newblock \emph{Psychological methods}, 14\penalty0 (4):\penalty0 323, 2009.

\bibitem[Sutton et~al.(2018)Sutton, Samavi, Doyle, and
  Koff]{sutton2018digitized}
Andrew Sutton, Reza Samavi, Thomas~E Doyle, and David Koff.
\newblock Digitized trust in human-in-the-loop health research.
\newblock In \emph{2018 16th Annual Conference on Privacy, Security and Trust
  (PST)}, pages 1--10. IEEE, 2018.

\bibitem[Tu et~al.(2020)Tu, Yu, Domeniconi, Wang, Xiao, and Guo]{tu2020multi}
Jinzheng Tu, Guoxian Yu, Carlotta Domeniconi, Jun Wang, Guoqiang Xiao, and
  Maozu Guo.
\newblock Multi-label crowd consensus via joint matrix factorization.
\newblock \emph{Knowledge and Information Systems}, 62\penalty0 (4):\penalty0
  1341--1369, 2020.

\bibitem[Venanzi et~al.(2014)Venanzi, Guiver, Kazai, Kohli, and
  Shokouhi]{venanzi2014community}
Matteo Venanzi, John Guiver, Gabriella Kazai, Pushmeet Kohli, and Milad
  Shokouhi.
\newblock Community-based {Bayesian} aggregation models for crowdsourcing.
\newblock In \emph{Proceedings of the 23rd international conference on World
  wide web}, pages 155--164, 2014.

\bibitem[Wilder et~al.(2020)Wilder, Horvitz, and Kamar]{wilder2020learning}
Bryan Wilder, Eric Horvitz, and Ece Kamar.
\newblock Learning to complement humans.
\newblock 2020.

\bibitem[Yan et~al.(2011)Yan, Rosales, Fung, and Dy]{yan2011active}
Yan Yan, Romer Rosales, Glenn Fung, and Jennifer~G Dy.
\newblock Active learning from crowds.
\newblock In \emph{International Conference of Machine Learning}, 2011.

\bibitem[Zampieri et~al.(2019)Zampieri, Malmasi, Nakov, Rosenthal, Farra, and
  Kumar]{zampieri2019predicting}
Marcos Zampieri, Shervin Malmasi, Preslav Nakov, Sara Rosenthal, Noura Farra,
  and Ritesh Kumar.
\newblock Predicting the type and target of offensive posts in social media.
\newblock In \emph{Proceedings of the 2019 Conference of the North American
  Chapter of the Association for Computational Linguistics: Human Language
  Technologies, Volume 1 (Long and Short Papers)}, pages 1415--1420, 2019.

\bibitem[Zetzsche et~al.(2020)Zetzsche, Arner, Buckley, and
  Tang]{zetzsche2020artificial}
Dirk~A Zetzsche, Douglas~W Arner, Ross~P Buckley, and Brian Tang.
\newblock Artificial intelligence in finance: Putting the human in the loop.
\newblock 2020.

\bibitem[Zhang et~al.(2020)Zhang, Liao, and Bellamy]{zhang2020effect}
Yunfeng Zhang, Q~Vera Liao, and Rachel~KE Bellamy.
\newblock Effect of confidence and explanation on accuracy and trust
  calibration in ai-assisted decision making.
\newblock In \emph{Proceedings of the 2020 Conference on Fairness,
  Accountability, and Transparency}, pages 295--305, 2020.

\bibitem[Zhou and Liu(2010)]{zhou2010multi}
Zhi-Hua Zhou and Xu-Ying Liu.
\newblock On multi-class cost-sensitive learning.
\newblock \emph{Computational Intelligence}, 26\penalty0 (3):\penalty0
  232--257, 2010.

\end{thebibliography}

\clearpage
\appendix

\begin{figure}
\centering
 \includegraphics[width=0.7\linewidth]{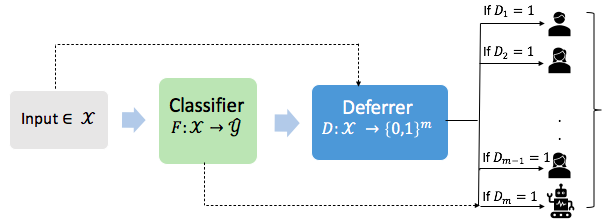}
 \caption{Overview of our model.}
 \label{fig:model}
 \end{figure}

\section{Proofs} \label{sec:proofs}
In this section, we present the proofs of the theoretical statements made in the main text.

\subsection{Proof of Theorem~\ref{thm:gradient}}

\begin{proof}
Both the propositions can be proved together.
Note that 
\[\sigma'(x) = \frac{2e^xe^{1-x}}{(e^x + e^{1-x})^2}.\]
Therefore,
\begin{align*}
\pdv{L}{D} = -Y \cdot \frac{2e^{1-D^\top Y_{E,1}}}{e^{D^\top Y_{E,1}} + e^{1-D^\top Y_{E,1}}} \cdot Y_{E,1}+ (1-Y) \cdot \frac{2e^{D^\top Y_{E,1}}}{e^{D^\top Y_{E,1}} + e^{1-D^\top Y_{E,1}}} \cdot Y_{E,1}, 
\end{align*}
which leads to the statement of the theorem.
%

% Taking another derivative with respect to $D$ also shows that the second derivative is positive, implying that $L$ is convex with respect to $D$.
% %
% Similarly, since $\log$ is a concave function, the first of $L$ is convex in $F$ and the second part uses a product of $F$ and the last coordinate of $D$, and hence is convex in $F$ as well.
\end{proof}

\subsection{Proof of Proposition~\ref{prop:convex}}
\begin{proof}
Convexity with respect to $D$ can be shown as an extension of the above proof.
Taking the second derivative with respect to $D$ also shows that it is always non-negative, implying that $L$ is convex with respect to $D$.
Similarly, the first part of $L$ is convex in $F$ (since $L_{\text{clf}}$ is convex) and the second part contains the negative log-exponent of the product of $F$ and the last coordinate of $D$, and hence is convex in $F$ as well.
\end{proof}

\subsection{Proof of Theorem~\ref{thm:convergence}}
In this section, we prove the convergence bound presented in Theorem~\ref{thm:convergence}.
By using the standard projected gradient-descent convergence bound stated below, we can prove the convergence rate bound for projected gradient-descent algorithm on the joint learning model.

\begin{theorem}[\cite{hiriart2013convex, boyd2004convex}] \label{thm:grad_desc_bound}
Given a convex, $\ell$-Lipschitz smooth function $f: \R^n \rightarrow \R$, oracle access to its gradient, starting point $x_0 \in \R^n$ with $\norm{x_0 - x^\star} \leq \delta$ (where $x^\star$ is an optimal solution to $\min_{x} f(x)$) and $\varepsilon > 0$, the projected gradient descent algorithm, with starting point $x_0$, step-size $1/2\ell$ and after $T$ iterations, returns a point $x$ such that
\[f(x) \leq f(x^\star) + \varepsilon.\]
Here $T = O \left( \frac{\ell\delta^2}{\varepsilon} \right)$. 
%The algorithm performs $O(nT)$ arithmetic operations.
\end{theorem}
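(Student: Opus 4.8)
The plan is to run the classical two-ingredient analysis for first-order methods on convex, $\ell$-smooth objectives: a per-step ``sufficient decrease'' estimate coming from smoothness, paired with a ``progress toward the optimum'' estimate coming from convexity and the geometry of the projection, and then a telescoping sum. Throughout write $x^\star$ for a minimizer, $\delta = \norm{x_0 - x^\star}$, and let the iterates be $x_{t+1} = \Pi_C(x_t - \eta\,\nabla f(x_t))$ with step size $\eta = 1/(2\ell)$, where $\Pi_C$ denotes Euclidean projection onto the feasible set (the hypercube in the application; taking $C = \R^n$ recovers plain gradient descent). The first step I would record is the \emph{descent lemma}: since $\nabla f$ is $\ell$-Lipschitz, $f(y) \le f(x) + \inp{\nabla f(x),\, y - x} + \tfrac{\ell}{2}\norm{y-x}^2$ for all $x,y$. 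This is the only place smoothness is used, and it follows by integrating $\nabla f$ along the segment $[x,y]$ and applying Cauchy--Schwarz to the Lipschitz bound.

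The crux is the \emph{one-step inequality}
\[
f(x_{t+1}) - f(x^\star) \;\le\; \frac{1}{2\eta}\Bigl(\norm{x_t - x^\star}^2 - \norm{x_{t+1} - x^\star}^2\Bigr),
\]
whose right-hand coefficient equals $\ell$ for $\eta = 1/(2\ell)$. To derive it I would apply the descent lemma at $y = x_{t+1}$, bound $f(x_t) \le f(x^\star) + \inp{\nabla f(x_t),\, x_t - x^\star}$ by convexity, and combine. \textbf{The main obstacle is the projection:} when $x_{t+1}$ is the projected point rather than the raw gradient step, I must replace $\nabla f(x_t)$ by the gradient mapping $G_t := \eta^{-1}(x_t - x_{t+1})$ and invoke the first-order optimality (variational) inequality for the projection, namely $\inp{\,x_t - \eta\,\nabla f(x_t) - x_{t+1},\; x^\star - x_{t+1}} \le 0$. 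Feeding this into the descent-lemma/convexity combination discards the cross term, and the polarization identity $\inp{x_t - x_{t+1},\, x_{t+1} - x^\star} = \tfrac12\bigl(\norm{x_t-x^\star}^2 - \norm{x_{t+1}-x^\star}^2 - \norm{x_t-x_{t+1}}^2\bigr)$ produces the stated bound; the conservative step size $\eta = 1/(2\ell)$ is used exactly to make the leftover multiple of $\norm{x_t - x_{t+1}}^2$ non-positive so it can be dropped.

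Finally I would telescope. Summing the one-step inequality over $t = 0, \dots, T-1$ collapses the right-hand side to $\tfrac{1}{2\eta}\bigl(\norm{x_0 - x^\star}^2 - \norm{x_T - x^\star}^2\bigr) \le \tfrac{1}{2\eta}\delta^2 = \ell\delta^2$. The same descent-lemma/variational-inequality combination (now comparing against $x_t$ in place of $x^\star$) also yields $f(x_{t+1}) \le f(x_t)$, so the function values are non-increasing; hence
\[
f(x_T) - f(x^\star) \;\le\; \frac{1}{T}\sum_{t=1}^{T}\bigl(f(x_t) - f(x^\star)\bigr) \;\le\; \frac{\ell\,\delta^2}{T}.
\]
Setting the right-hand side equal to $\varepsilon$ gives $T = O(\ell\delta^2/\varepsilon)$, as claimed. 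I expect the only genuinely delicate point to be the projection argument; the descent lemma and the telescoping are routine.
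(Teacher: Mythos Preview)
Your proof sketch is correct and is precisely the standard textbook argument for the $O(\ell\delta^2/\varepsilon)$ rate of projected gradient descent on convex, $\ell$-smooth objectives. Note, however, that the paper does not actually prove this theorem: it is stated with citations to \cite{hiriart2013convex, boyd2004convex} and invoked as a black box in the proof of Theorem~\ref{thm:convergence}, so there is no ``paper's own proof'' to compare against beyond the textbook references your argument already mirrors.
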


\begin{proof}[Proof of Theorem~\ref{thm:convergence}]

We have the following loss function:
\begin{align*}
\textstyle L(F,D)= \alpha_1\E_{X, Y} \left[ L_{\text{clf}}(F; X,Y)\right] - \alpha_2 \E_{X, Y} \left[ Y \log(\hat{Y}_D) + (1-Y)\log (1 - \hat{Y}_D)\right].        
\end{align*}

%With $F$ belonging to the sigmoid class of functions and setting $L_{\text{clf}}$ as log-loss, we get that 
%$F(X) = 1/(1 + e^{-w^\top X})$, for some parameters $w \in \R^n$ and 
%$L_{\text{clf}}(F; X,Y) = -Y\log F(X) - (1-Y)\log (1-F(X))$.

The first step is to find the upper bound on the Lipschtiz-smoothness of combined loss function.
To that end, we first calculate the Lipschtiz-smoothness constants of $L$ with respect to $F$ and $D$ individually.
By definition,
\[\pdv[2]{L_{\text{clf}}}{F} \preccurlyeq \ell I.\]

Let $L_D := \E_{X, Y} \left[ Y \log(\hat{Y}_D) + (1-Y)\log (1 - \hat{Y}_D)\right]$.
Then,
\[\pdv{\hat{Y}_D}{F} = 2\hat{Y}_D^2 e^{1-2D^\top Y_E} D^{(m)}.\]
Using the above derivative, we get that
\[\pdv[2]{L_D}{F} \preccurlyeq 8e^2 \ell I.\]
Therefore, 
\[\pdv[2]{L}{F} \preccurlyeq (\alpha_1 + \alpha_28e^2) \ell I.\]

For the Lipschtiz-smoothness of $L$ with respect to $D$, note that we can use results on Lipschitz-smoothness of logistic-regression (since $L_D$ corresponds to log-loss with logistic regression parameter $D$).  
In particular, 
\[\pdv[2]{L}{D} \preccurlyeq 2 \alpha_2\max \text{eig}(Y_E^\top Y_E) I \preccurlyeq 2\alpha_2 mI.\]
The second inequality follows from the fact that matrix $Y_E$ only contains 0-1 entries.
For the cross second-derivative, from proof of Theorem~\ref{thm:gradient} we have that
\begin{align*}
\pdv{L}{D} =& -2 \alpha_2 Y \cdot (1-\sigma(D^\top Y_{E,1}))  \cdot Y_{E,1}\\ &+ 2 \alpha_2 (1-Y) \cdot \sigma(D^\top Y_{E,1}) \cdot Y_{E,1}.
\end{align*}
Therefore,
\begin{align*}
\pdv{L}{D}{F} =  2  \alpha_2\sigma(D^\top Y_{E,1})^2 e^{1-2D^\top Y_{E,1}} \cdot Y_{E,1} D_m.
\end{align*}
We simply need to bound the Frobenius norm of the above second derivative operator for our setting.
\[\left\lVert\pdv{L}{D}{F}\right\rVert_F \preccurlyeq 2\alpha_2e \sqrt{m}.\]

Therefore, combining the above inequalities, we get that the joint Lipschtiz-smoothness constant of $L$ with respect to $(F,D)$ (given constant $\alpha_1, \alpha_2$) is $\ell'$, where
\[\ell' \leq c (\ell + m),\]
where $c > 0$ is a constant.
Next, since we are using projected gradient descent algorithm, we know that the $\norm{D}^2 \leq m$.
Therefore, applying Theorem~\ref{thm:grad_desc_bound}, we get that we can converge to $\varepsilon$-close to the optimal solution using step-size $O((\ell + m)^{-1})$ and $T$ iterations, where
\[T = O\left( \frac{ (\ell + m)(\delta^2 + m)}{\varepsilon} \right).\]

%Here $\delta = \norm{w_0 - w^\star}^2$.

% \begin{align*}
%     &\pdv{\E_{X,Y} L_{\text{clf}}}{w} =\\ &- \E_{X,Y} X \cdot \left[ \frac{Y}{F(X)} F(X)^{2} e^{-w^\top X} - \frac{Y}{1 - F(X)} F(X)^{2} e^{-w^\top X}  \right] \\
%     &= - \E_{X,Y} X \cdot \left[ Y F(X) e^{-w^\top X} - Y F(X)  \right] \\
%     &= \E_{X,Y} X \cdot Y \cdot F(X) \cdot (1 - e^{-w^\top X}).
% \end{align*}

% Using the above, we can get the gradient of $L$ with respect to $w$.
% \begin{align*}
%     &\pdv{L}{w} = \alpha_1 \E_{X,Y} X \cdot Y \cdot F(X) \cdot (1 - e^{-w^\top X}) - \alpha_2 \E_{X,Y} \left[ \right]
% \end{align*}

\end{proof}

\subsection{Proof of Theorem~\ref{thm:minimax_fair}}

\begin{proof}
We will first prove the theorem when\\ $\hat{z} := \arg \max_{z \in \mathcal{Z}} L^z(F^\star, D^\star) =0$, i.e.,
\[L^1(F^\star, D^\star) \leq L^0(F^\star, D^\star) \leq \max_{z \in \mathcal{Z}} L^z(F^\circ, D^\circ).\]
Let $\beta = \mathbb{P}[Z = \hat{z}]$.
Then for any $F,D$,
\[L(F,D) = \beta \cdot L^0(F,D) + (1-\beta) \cdot L^1(F,D),\]
and by definition,
\[L(F^\star, D^\star) \geq L(F^\circ, D^\circ).\]
We will further divide the analysis into two cases. Case 1:
\[L^1(F^\circ, D^\circ) \leq L^0(F^\circ, D^\circ),\]
By definition of minimax-fair solution then,
\[L^0(F^\star, D^\star) \leq L^0(F^\circ, D^\circ).\]
Next we use this inequality to look at $L^1(F^\star, D^\star)$.  
\begin{align*}
    &L(F^\star, D^\star) \geq L(F^\circ, D^\circ) \\
    \Rightarrow &\beta \cdot L^0(F^\star, D^\star) + (1-\beta) \cdot L^1(F^\star, D^\star) \geq \beta \cdot L^0(F^\circ, D^\circ) + (1-\beta) \cdot L^1(F^\circ, D^\circ) \\
    \Rightarrow & \beta \cdot L^0(F^\star, D^\star) + (1-\beta)\cdot L^1(F^\star, D^\star) \geq \beta\cdot  L^0(F^\star, D^\star) + (1-\beta)\cdot L^1(F^\circ, D^\circ) \\
    \Rightarrow  &L^1(F^\star, D^\star) \geq L^1(F^\circ, D^\circ).
\end{align*}
Therefore, the risk disparity in this case
\begin{align*}
    \abs{L^0(F^\star, D^\star) - L^1(F^\star, D^\star)} 
     &= L^0(F^\star, D^\star) - L^1(F^\star, D^\star) \\
    & \leq L^0(F^\circ, D^\circ) - L^1(F^\circ, D^\circ).
\end{align*}
Hence the theorem is true in this case. 

Case 2:
\[L^0(F^\circ, D^\circ) \leq L^1(F^\circ, D^\circ),\]
By definition of minimax-fair solution then,
\[L^0(F^\star, D^\star) \leq L^1(F^\circ, D^\circ).\]
Once again we use this inequality to look at $L^1(F^\star, D^\star)$.  
\begin{align*}
    &L(F^\star, D^\star) \geq L(F^\circ, D^\circ) \\
    \Rightarrow &\beta \cdot L^0(F^\star, D^\star) + (1-\beta) \cdot L^1(F^\star, D^\star) \geq \beta \cdot L^0(F^\circ, D^\circ) + (1-\beta) \cdot L^1(F^\circ, D^\circ) \\
    \Rightarrow &\beta \cdot L^0(F^\star, D^\star) + (1-\beta) \cdot L^1(F^\star, D^\star) \geq \beta \cdot L^0(F^\circ, D^\circ) + (1-\beta) \cdot L^0(F^\star, D^\star) \\
    \Rightarrow & (1-\beta) \cdot L^1(F^\star, D^\star)\geq \beta \cdot L^0(F^\circ, D^\circ) + (1-2\beta) \cdot L^0(F^\star, D^\star) \\
    \Rightarrow & (1-\beta) \cdot L^1(F^\star, D^\star)\geq \beta \cdot L^0(F^\circ, D^\circ) + (1-2\beta) \cdot L^1(F^\star, D^\star) \\
    \Rightarrow  &L^1(F^\star, D^\star) \geq L^0(F^\circ, D^\circ).
\end{align*}
Therefore, the risk disparity in this case
\begin{align*}
    \abs{L^0(F^\star, D^\star) - L^1(F^\star, D^\star)} 
     &= L^0(F^\star, D^\star) - L^1(F^\star, D^\star) \\
     \leq & L^1(F^\circ, D^\circ) - L^0(F^\circ, D^\circ) \\
    = &\abs{L^0(F^\circ, D^\circ) - L^1(F^\circ, D^\circ)} \\
\end{align*}
Hence the theorem is true in this case as well. 
The proof for $\hat{z} := \arg \max_{z \in \mathcal{Z}} L^z(F^\star, D^\star) =1$ follows by symmetry.
\end{proof}

\subsection{Proof of Theorem~\ref{thm:sparsity}}

\begin{proof}
Recall that in the binary class setting, given deferrer output $D$ and expert predictions $Y_E$ the output probabilistic prediction is calculated as $$\hat{Y}_D := \sigma \left(D^\top Y_{E}\right).$$
For simplicity of presentation, since we are talking about a single input setting we are removing the input $X, W$ in the formulas, i.e., $D(X)$ is represented as just $D$ and $E_i(X,W)$ is just $E_i$.
Let $E_{r_1}, \dots, E_{r_k}$ denote the $k$ experts sampled according to the distribution induced by $D(X)$. Then the output of the sparse framework is
$$\tilde{Y}_{D,k} := \sigma \left(\sum_{i=1}^m D^{(i)} \cdot \frac{1}{k} \sum_{i=1}^k E_{r_i}\right).$$
First we look at $\hat{Y}_D$.
\begin{align*}
    \log \hat{Y}_D  = D^\top Y_{E} - \log (e^{D^\top Y_{E}} + e^{1 - D^\top Y_{E}})
\end{align*}
Similarly,
\begin{align*}
    \log \tilde{Y}_{D,k}   = \sum_{i=1}^m D^{(i)} \cdot \frac{1}{k} \sum_{i=1}^k E_{r_i}- \log (e^{\sum_{i=1}^m D^{(i)} \cdot \frac{1}{k} \sum_{i=1}^k E_{r_i}} + e^{1 - \sum_{i=1}^m D^{(i)} \cdot \frac{1}{k} \sum_{i=1}^k E_{r_i}})
\end{align*}
Let $N(D):= \log (e^{D^\top Y_{E}} + e^{1 - D^\top Y_{E}})$ and let 
\[N'(D,k) := \log (e^{\sum_{i=1}^m D^{(i)} \frac{1}{k} \sum_{i=1}^k E_{r_i}}{+}e^{1 - \sum_{i=1}^m D^{(i)} \frac{1}{k} \sum_{i=1}^k E_{r_i}}).\]
Then, taking the absolute difference of log-losses, we get
\begin{align*}
    \E \abs{\log\hat{Y}_D - \log \tilde{Y}_{D,k}} &\leq \E\abs{D^\top Y_{E} - \sum_{i=1}^m D^{(i)} \cdot \frac{1}{k} \sum_{i=1}^k E_{r_i}}\\ &+ \E\abs{N(D)- N'(D,k)}.
\end{align*}
We will analyze the two terms separately. Note that for an expert sampled from distribution induced by $D$, we have that
\[\mathbb{E}_{r \sim D}[E_r] \cdot \sum_{i=1}^m D^{(i)}  = D^\top Y_E.\]
Therefore,
\begin{align*}
\E\abs{D^\top Y_{E} - \sum_{i=1}^m D^{(i)} \cdot \frac{1}{k} \sum_{i=1}^k E_{r_i}} &= \sum_{i=1}^m D^{(i)} \cdot \E\abs{ \frac{1}{k} \sum_{i=1}^k \mathbb{E}_{r \sim D}[E_r] - E_{r_i} } \\
& \leq \sum_{i=1}^m D^{(i)} \cdot \frac{1}{k} \sum_{i=1}^k \E\abs{ \mathbb{E}_{r \sim D}[E_r] - E_{r_i} } =  \sum_{i=1}^m D^{(i)} \cdot s_D,
\end{align*}
where $s_D$ represents the mean absolute deviation with respect to distribution induced by $D$.
For the second absolute difference, note that both
\[D^\top Y_{E}, \sum_{i=1}^m D^{(i)} \cdot \frac{1}{k} \sum_{i=1}^k E_{r_i} \leq \sum_{i=1}^m D^{(i)}.\]
When $x > 0$,
\begin{align*}
\log (e^x + e^{1-x}) &= \log (e^{-x} (e^{2x} + e))\\ &\leq  \log (e^{2x} + e) \\ &\leq \log 2 + \max(2x, 1).
\end{align*}
Furthermore, $\log (e^x + e^{1-x})$ is convex and achieves minimum value $0.5 + \log 2$.
Therefore, using the above upper and lower bounds, we get
\[\E\abs{N(D)- N'(D,k)} \leq \max\left(2\sum_{i=1}^m D^{(i)}, 1\right) -0.5.\]
Hence, 
\[\E \abs{\log\hat{Y}_D{-}\log \tilde{Y}_{D,k}} < s_D \norm{D}_1  + \max\left(2\norm{D}_1, 1\right).\]
\end{proof}

%, so as to force the deferrer to primarily depend on the classifier.
\begin{figure}[t]
    \centering
    \includegraphics[width=0.5\linewidth]{}
    \caption{Weights assigned by the joint learning model and the accuracies of the 20 experts (one iteration shown). Accuracies and weights are seen to follow a similar pattern.}
    \label{fig:wts_plot}
    %\vspace{-1em}
\end{figure}

\begin{figure*}[t]
\centering
\includegraphics[width=\linewidth]{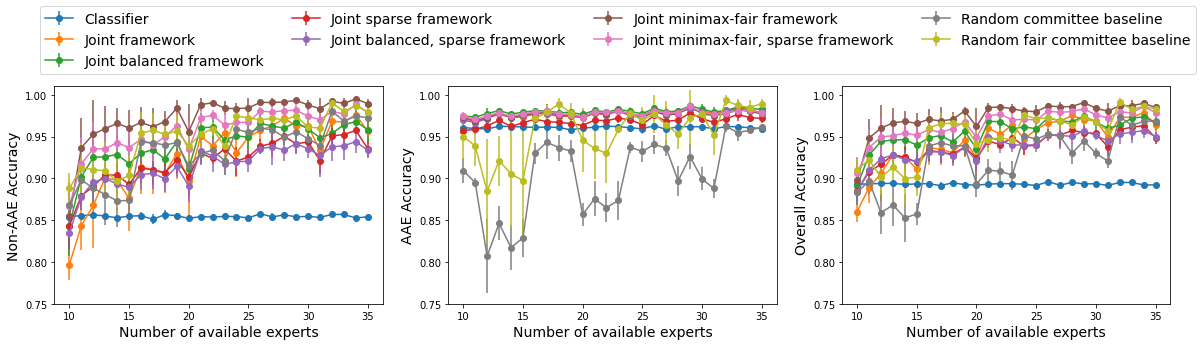} 
\caption{{Performance of all methods for different number of available experts.}}
\label{fig:acc_by_diff_experts}
\end{figure*}	

\begin{figure*}[t]
\centering
\includegraphics[width=\linewidth]{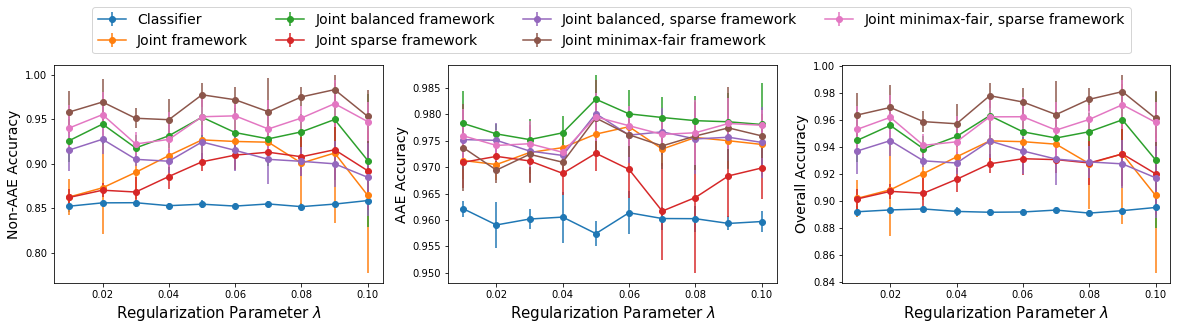} 
\caption{{Performance of all methods for different values of regularization parameter $\lambda$.}}
\label{fig:acc_by_diff_lambda}
\end{figure*}	

\begin{figure*}[t]
\centering
\includegraphics[width=\linewidth]{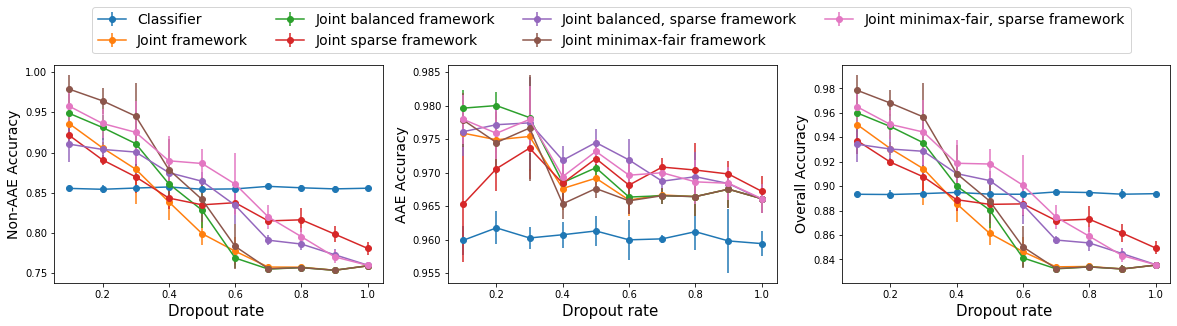} 
\caption{{Performance of all methods for different dropout rates.}}
\label{fig:acc_by_diff_dropout}
\end{figure*}

\begin{figure*}[t]
\centering
\includegraphics[width=\linewidth]{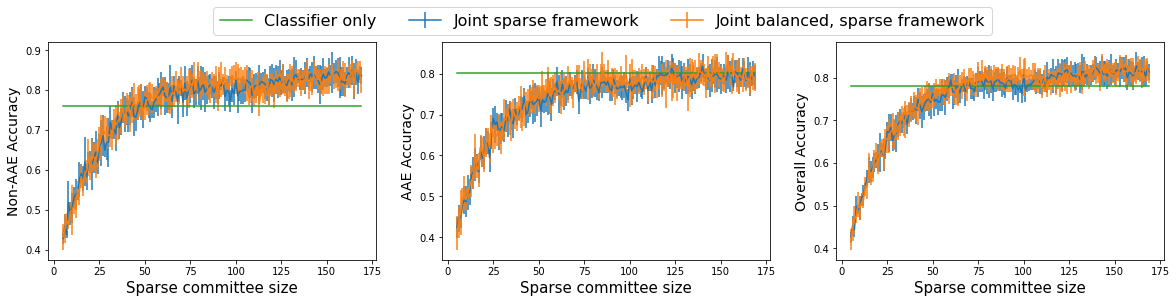} 
\caption{{Performance of sparse variants of the joint frameworks on the MTurk dataset for different committee sizes $k$.}}
\label{fig:sparse_mturk}
\end{figure*}	

\section{Details of baselines} \label{sec:baselines}

\subsection{LL Algorithm \cite{li2015cheaper}}
This algorithm, proposed by \citet{li2015cheaper}, takes as input the a single measure of reliability for each expert and returns $k$ experts using a formula that takes into account the reliability, number of classes and size of desired committee size $k$ (see Algorithm 1 in \cite{li2015cheaper}).
To calculate the measure of reliability for each expert using the training set, we simply calculate the accuracy of each expert over the training set.

Note that the main drawbacks of this approach is that it simply returns a single committee, i.e. does not choose the experts in an input-specific manner, and that it treats the pre-trained classifier as yet another learner. 

\subsection{CrowdSelect Algorithm \cite{qiu2016crowdselect}}
CrowdSelect is a more advanced task-allocation algorithm that takes into account the error models of different experts, as well as, their task-specific reliabilities and the individual costs associated with each expert consultation.
However, the proposed algorithm assumes that \textit{error rate of workers} for any given task is provided as input or can be estimated using autoregressive methods that use the task identities.
In our setting, the specific task classification (for example, cluster identity in case of Section~\ref{sec:synthetic_dataset})) may not be available; hence, these error models need to be separately constructed.

% to construct the error models, however
To construct the error models for the experts, for each expert $i$, we simply train a two-layer neural network $h_i$ on the train feature vectors using binary class labels that correspond to whether the expert's prediction for the given train feature was correct or not.
Then, for any test/future sample, $h_i$ will return the probability that the expert $i$ returns a correct prediction.
Using these error models, we then implement Algorithm 1 in \cite{qiu2016crowdselect} to get input-specific committees.

There are three drawbacks with this approach: (1) the pre-trained classifier is once again treated as yet another learner, (2) it is only applicable for binary classification (\cite{qiu2016crowdselect} propose studying extensions to non-binary as future work), and (3) the error models of all experts are learnt independently - this is inhibitory since it does not allow the perfect stratification of input domain into the domains of different experts.

Our method addresses all three drawbacks by learning a single deferrer and learning it simultaneously with a classifier.

\section{Other empirical results for offensive language dataset with synthetic experts} \label{sec:other_experiments}
In this section, we present additional empirical results for the offensive language dataset with multiple synthetic experts.

\subsection{Variation with number of experts}
We vary the number of experts $m$ from 10 to 35, while keeping $\lambda$ fixed at 0 and dropout rate fixed at 0.2, and present the variation of overall and dialect-specific accuracies when using different number of experts. The other parameters are kept to be the same as \S\ref{sec:synthetic_experiments}.
The results are presented in \textbf{Figure~\ref{fig:acc_by_diff_experts}}.
As expected, the performance of all joint frameworks increases with increasing number of experts, and the performance of minimax-fair framework is better than other methods in most cases.

\subsection{Performance of random committee baselines}
Figure~\ref{fig:acc_by_diff_experts} also provides further insight into the random committee baselines.
Since $75\%$ of the experts are biased against the AAE dialect, simply choosing the committee randomly leads to reduced accuracy for the AAE dialect.
When the committee is selected in a dialect-specific manner (random fair committee baseline), the disparity across dialects reduce but the accuracies of the experts are not taken into account.
The performance of these two baselines highlight the importance of selecting the experts in an input-specific manner and taking accuracies/biases of experts while deferring.

\subsection{Impact of $\lambda$}

We next vary the parameter $\lambda$ from 0.01 to 0.1, while keeping number of experts at $m=20$ and dropout rate at 0.2, and present the variation of overall and dialect-specific accuracies for different $\lambda$ values. 
The results are presented in \textbf{Figure~\ref{fig:acc_by_diff_lambda}}.
The variation with respect to $\lambda$ shows that setting its value close to 0.05 leads to best performance for most methods.
Smaller values of $\lambda$ will lead to low dependence on the classifier, while higher values of $\lambda$ implies associating larger regularization costs with the experts, and the figure shows that the performance for large $\lambda$ has larger variance and/or is closer to the performance of the classifier.

\subsection{Impact of dropout rate}
Finally we vary the dropout rate from 0.1 to 0.9, while keeping number of experts at $m=20$ and $\lambda=0.05$, and present the variation of overall and dialect-specific accuracies for different dropout rates. 
As expected, larger values of dropout can imply that the framework is unable to decipher the accuracies of the experts and, hence, leads to a drop in accuracy.
Reasonable levels of dropout rate (around 0.2), on the other hand, do not impact accuracy but significantly reduce load on the more accurate experts.

\section{Other empirical results for MTurk dataset} \label{sec:mturk_other}
As mentioned in \S\ref{sec:mturk}, the task of differentiating between the experts is more challenging for the MTurk dataset since relatively fewer prior predictions are available for each expert.
Correspondingly, the sparse variants do not perform so well when the chosen committee size $k$ is small.

The performance of the sparse variants, as a function of $k$, is presented in \textbf{Figure~\ref{fig:sparse_mturk}}.
From the figure, one can see that to achieve performance similar or better than the classifier, $k$ needs to be around 60 or larger.

\end{document}